\newcommand\independent{\protect\mathpalette{\protect\independenT}{\perp}}
\def\independenT#1#2{\mathrel{\rlap{$#1#2$}\mkern2mu{#1#2}}}
\newcommand*{\addFileDependency}[1]{
\typeout{(#1)}
%
%
%
\IfFileExists{#1}{}{\typeout{No file #1.}}
}\makeatother
\newcommand*{\myexternaldocument}[1]{%
\externaldocument[app:]{#1}%
\addFileDependency{#1.tex}%
\addFileDependency{#1.aux}%
}
\DeclareMathOperator*{\argmin}{arg\,min}
\newtheorem{thmprop}{Proposition}
\newtheorem{thmprop*}{Proposition}
\newtheorem*{thmidasmp*}{Identifying assumptions}
\theoremstyle{definition}
\newtheorem{thmobs}{Observation}
\newtheorem*{thmrem*}{Remark}
\def\independenT#1#2{\mathrel{\rlap{$#1#2$}\mkern2mu{#1#2}}}
\def\indep{\independent{}}
\def\E{\mathbb{E}}
\def\V{\mathbb{V}}
\def\hbeta{\hat{\beta}}
\def\hcS{\hat{\mathcal{S}}}
\def\ba{{\bf a}}
\def\bA{{\bf A}}
\def\bM{{\bf M}}
\def\bR{{\bf R}}
\def\bX{{\bf X}}
\def\bY{{\bf Y}}
\def\bbR{\mathbb{R}}
\def\cK{\mathcal{K}}
\def\cO{\mathcal{O}}
\def\cS{\mathcal{S}}
\def\MINTY{{\tt MINTY}}
\def\LASSO{{\tt LASSO}}
\def\XGB{{\tt XGB}}
\def\NEUMISS{{\tt NEUMISS}}
\def\DT{{\tt DT}}
\def\Life{{\textit{Life}}}
\def\Housing{{\textit{Housing}}}
\def\ADNI{{\textit{ADNI}}}
\def\E{\mathbb{E}}
\def\cK{\mathcal{K}}
\def\bR{\mathbf{R}}
\def\bX{\mathbf{X}}
\def\bY{\mathbf{Y}}
\def\bM{\mathbf{M}}
\def\bA{\mathbf{A}}
\def\zx{\bar{x}}
\def\zX{\bar{X}}
\def\zbX{\bar{\mathbf{X}}}
\def\tbeta{\tilde{\beta}}
\def\na{{\tt NA}}
\def\olcA
\def\olcW{\overline{\mathcal{W}}}
\def\ds1{\mathds{1}}
\def\cO{\mathcal{O}}
\def\bbR{\mathbb{R}}
\def\na{{\tt NA}}
\title{MINTY: Rule-based Models that Minimize the Need for Imputing Features with Missing Values}
 \author{Lena Stempfle $\And$ Fredrik D. Johansson}
\date{Department of Computer Science and Engineering (CSE), Chalmers University of Technology, Sweden\\ \href{mailto:stempfle@chalmers.se}{stempfle@chalmers.se} \href{mailto:fredrik.johansson@chalmers.se}
{fredrik.johansson@chalmers.se}}
\begin{document}
\maketitle

\begin{abstract}
Rule models are often preferred in prediction tasks with tabular inputs as they can be easily interpreted using natural language and provide predictive performance on par with more complex models. However, most rule models’ predictions are undefined or ambiguous when some inputs are missing, forcing users to rely on statistical imputation models or heuristics like zero imputation, undermining the interpretability of the models. In this work, we propose fitting concise yet precise rule models that learn to avoid relying on features with missing values and, therefore, limit their reliance on imputation at test time. We develop \MINTY{}, a method that learns rules in the form of disjunctions between variables that act as replacements for each other when one or more is missing. This results in a sparse linear rule model, regularized to have small dependence on features with missing values, that allows a trade-off between goodness of fit, interpretability, and robustness to missing values at test time. We demonstrate the value of \MINTY{} in experiments using synthetic and real-world data sets and find its predictive performance comparable or favorable to baselines, with smaller reliance on features with missing values. 
\end{abstract}

\section{Introduction}
Linear rule models find extensive use in prediction tasks such as classification, regression, and risk scoring~\citep{furnkranz2012foundations, wei2019generalized, margot2021new}, and are particularly favored in domains where interpretability holds paramount importance. In the same domains, it is common for some of the variables used in the learned rules to be unobserved, missing at the time of prediction.

Established approaches to prediction with incomplete data at test time, include Bayesian modeling~\citep{webb2010naive}, fallback default rules~\citep{twala2008good,chen2016xgboost}, weighted estimating equations~\citep{ibrahim2005missing}, prediction with missingness indicators~\citep{le2020neumiss} and imputation~\citep{rubin1976inference}. Although imputation is powerful, it is not always optimal under test-time missingness~\citep{lemorvan20a_linear} and often assumes that data is missing at random
(MAR)~\citep{carpenter2012multiple, seaman2013meant}. 

A limitation of existing methods is that they either i) are specific to less interpretable model classes or ii) undermine the interpretability offered by rule-based models by relying on less interpretable auxiliary models (for imputation, estimation weighting)~\citep{rubin1988overview} or on parameters associated with missingness itself (fallback rules, missingness indicators)~\citep{jones1996, chen2016xgboost, stempfle2023sharing}.

To address these shortcomings, we aim to \textit{learn interpretable rule models that inherently limit the need for imputation of features with missing values}. We call our solution \MINTY{}, which handles  \textit{\textbf{mi}ssingness} and provides {\textit{\textbf{int}erpretablit\textbf{y}}} by learning generalized linear rule  models (GLRM) where literals of single variables are grouped in disjunctive rules so that the truth value of a rule can be determined when \emph{one} of the literals is observed and true, no matter if the others are missing. This idea exploits redundancy in the covariate set inherent to many prediction tasks by allowing observed variables to be used as \emph{replacements} for missing ones. Through a tunable regularization penalty on rules whose value can frequently not be determined, we mitigate the reliance on imputation at test time.

\paragraph{Illustrative Example: Alzheimer's Progression.}
Figure~\ref{fig:example} illustrates a disjunctive linear rule model for predicting cognitive decline. In the model, rules (left) are combined with coefficients (right) to calculate a predicted change in cognitive function (measured by ADAS13). The example shows the model's prediction for a patient, Anna, whose observed variables are displayed at the bottom. If at least one literal in each rule is observed and true, the added score is the same whether other variables in the rule are missing. For Anna, her TAU protein fragment level is observed to be in the range (\texttt{Tau $\leq$ 191}), while a measurement for PTAU is missing. Despite this, the second rule can be evaluated and is true, contributing -5.2 to the final score. 
Similarly, the first rule is true, as we know that for Anna, \texttt{MMSE = 24}, even though she has not received a prior AD diagnosis. In the case of a single-feature rule with a missing value, (e.g., \texttt{Married=True}), we default to zero-imputation, and no score is added to the total. This is common practice in the use of risk scores~\citep{afessa2005influence}, but may be possible to avoid by learning disjunctive rules whose value can be determined by a single observed feature and have to be zero-imputed less often.

\begin{figure}[t]
\centering\includegraphics[width=1.\columnwidth]{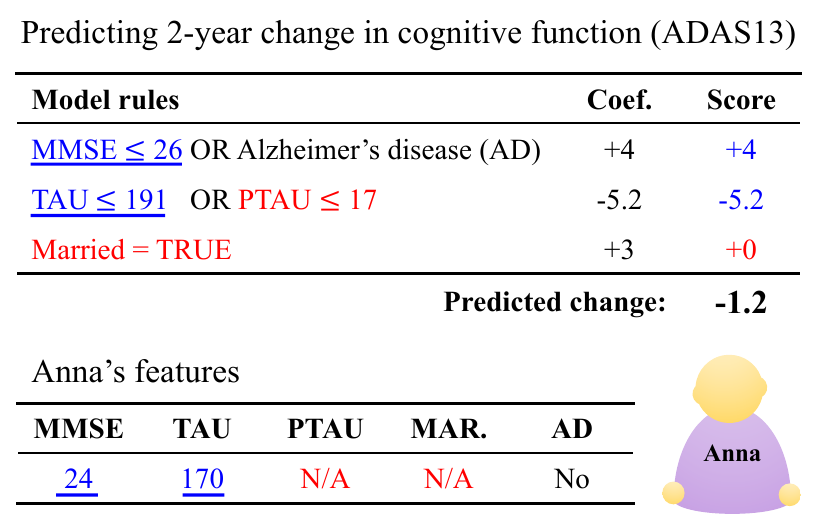}%
\caption{Illustrative example of scoring system predicting cognitive decline, measured by a change in the ADAS13 cognitive function score, using the \ADNI{} data including incomplete data. The blue, underlined features indicate that these variables are observed for the specific patient, Anna, and the red shows that the observations for the variables are missing.\label{fig:example}}
\end{figure}

\paragraph{Contributions.}
Our contributions can be summarized as follows: 
1) We propose \MINTY{}, a generalized linear rule model, which uses disjunctive rules to exploit redundancy in the input variables, mitigating the need for imputation. 
2) We optimize \MINTY{} by adapting the column generation strategy of~\citet{wei2019generalized}, iteratively adding rules to the model based on a tunable trade-off between high predictive performance and small reliance on missing values. 
3) We perform empirical experiments comparing \MINTY{} to baselines that either handle missing values natively or rely on imputation. The results show that our proposed method achieves comparable prediction performance to larger black-box models and models that rely much more on features with missing values in prediction.
%
%
\section{Rule Models \& Features with Missing Values}
\label{sec:problem}
%
We consider predicting an outcome $Y \in \mathbb{R}$ based on a vector of $d$ input features $X = [X_1, ..., X_d]^\top \in \bbR^d$ when the value of any feature $X_j$ may be missing \emph{at training time or test time}. Missingness is determined by a random binary mask $M  = [M_1, ..., M_d]^\top \in \{0,1\}^d$ applied to a complete variable set $X^*$, such that $X_j = X^*_j$ if $M_j = 0$, and $X_j = \na$ if $M_j = 1$.

Our goal is to minimize the expected error in prediction, $R(h) \coloneqq \E_p[L(h(X), Y)]$, over a distribution $p$, using a hypothesis $h$ that handles missing values in the input $X$. $L$ is a loss function such as the squared error or logistic loss. 
To learn, we are given a training set of examples $D = \{(x_i, m_i, y_i)\}_{i=1}^m$, assumed to be drawn i.i.d. from $p$. Here, $x_i=[x_{i1}, ...x_{id}]^\top$ is the (partially missing) feature vector of sample $i$, and $m_i, y_i$ defined analogously. We let $\bX \in (\{0,1\} \cup \{\na\})^{n\times d}, \bM \in \{0,1\}^{n\times d}, \bY \in \mathbb{R}^{n\times 1}$ denote feature matrices, missingness masks and outcomes for all observations in $D$.

We say that a hypothesis $h$ \emph{relies on features with missing values} for an observation $x_i$ if there is a feature $j$ such that 1) $x_{ij} = \na$, and 2) computing $h(x_i)$ requires evaluating $x_{ij}$. We use a binary indicator $\rho_h(x_i) \in \{0,1\}$ to indicate reliance.
For example, a linear model used with imputation (e.g., zero imputation or MICE) relies on features with missing values whenever its input $x_i$ has any missing value. An XGBoost ensemble $h$ has $\rho_h(x_i)=1$ if $x_i$ passes a ``default'' rule in its traversal through any of the model's trees. If the tree contains default rules, but $x_i$ traverses neither of them, $\rho_h(x_i)=0$. We denote the average reliance $\bar{\rho}(h) = \E_{X\sim p}[\rho(X)]$.

We propose \MINTY{}, a learning algorithm that mitigates reliance on features with missing values by making predictions using \emph{disjunctions} (or-clauses) of literals, e.g., ``(Age $>$ 60) or (Prior stroke)''. If the value of ``Age'' is missing, but ``Prior stroke'' is True, the rule no longer depends on the value of ``Age''. This creates robustness by redundancy. Moreover, \MINTY{} adds regularization to ensure that its rules can be evaluated with high probability despite missing values. We build our method on generalized linear rule models. 

\subsection{Generalized Linear Rule Models}
In rule learning, features represent binary logical literals, where $X_{ij} = 1$ means that literal $j$ is True for observation $i$. For instance, feature $j$ may represent the literal $Age \geq 70$, and a subject $i$ that is 73 years old would have $x_{ij} = 1$. There are standard ways to transform continuous and categorical values to literals, such as discretization by quantiles and dichotomization~\citep{rucker2015researcher}.

\citet{wei2019generalized} defined generalized linear rule models (GLRM) using three components: 
\begin{enumerate}
\item \emph{Rule definitions} $z_{k} = [z_{1k}, ..., z_{dk}]^\top \in \{0,1\}^{d}$, for rules $k=1, ..., K$, which define logical rules in terms of inclusion indicators $z_{jk}$ of literals $j\in [d]$.
\item \textit{Rule activations} $a_{i} = [a_{i1}, ..., a_{iK}]^\top \in \{0,1\}^K$, where $a_{ik}$ indicates whether rule $k$ is satisfied by observation $x_i$.
\item \textit{Rule coefficients}, $\beta = [\beta_1, ..., \beta_K]^\top \in \mathbb{R}^K$, where $\beta_k$ relates  rule $k$ to the predicted outcome. Letting rule 1 always be true, $\beta_1$ is the intercept.
\end{enumerate}

In this work, we use only \emph{disjunctive} GLRMs, were the activation of rule $k$ for complete $x_i$ is defined as
$$
a_{ik} \coloneqq \bigvee_{j=1}^d x_{ij} z_{jk} = \max_{j \in [d]}  x_{ij} z_{jk}~.
$$
In other words, $a_{ik} = 1$ if for \emph{any} feature $j$, the literal is True ($x_{ij}=1$) and $j$ is included in rule $k$ ($z_{jk}=1$).

A GLRM predicts the outcome $y_i$ for a \emph{complete} input $x_i$ as a generalized linear model of the rule indicators,
$$
\hat{y}_i = \Phi'(\eta_i) \; \mbox{ where } \; \eta_i = a_i^\top \beta
$$
where $\Phi$ is the log-partition function of the conditional distribution for an exponential family model $p(Y=y \mid X=x) = h(y)\exp(\eta y - \Phi(\eta))$. For linear regression, $\Phi'(\eta) = \eta$ and for logistic regression $\Phi'(\eta)  = 1/(1+\exp(-\eta))$ is the logistic function $\sigma(\eta)$.

\subsection{Mitigating Reliance on Missing Features with Disjunctive Rules}
GLRMs are not designed to handle missing values by default. In this work, we treat the truth value of rules as potentially missing as well, depending on the literals included in the rule. Concretely, 
$$
a_{ik} = \left\{
  \begin{array}{@{}ll@{}}
    1, & \exists j \in z_k : m_{ij}=0 \land x_{ij}=1 \\ 
      0, & \forall j \in z_k : m_{ij}=0 \land x_{ij}=0 \\
      \na, & \forall j \in z_k : m_{ij}=1 \lor x_{ij}=0 \\ 
  \end{array}\right.~.
$$
where $(j \in z_k) \Leftrightarrow (z_{jk} = 1)$.  For example, 
$$
  (x_1 \lor x_2) = \left\{
  \begin{array}{@{}ll@{}}
    1, & x_1 = 1 \;\text{or}\; x_2 = 1 \\ 
      0, & x_1 = 0 \;\text{ and }\; x_2 = 0 \\
      \na, & (x_1=0 \;\text{ and }\; x_2=\na) \; \text{ or }\\ 
      & (x_1=\na \;\text{ and }\; x_2=0)
  \end{array}\right.~.
$$
To predict using a rule $k$ such that $a_{ik}=\na$, we would still need to impute some of the missing literals. 

On the other hand, \emph{evaluating the disjunction does not rely on all of its literals being observed}. As long as one literal is observed and True, we know that the value of the disjunction is True as well. Hence, the reliance $\bar{\rho}(h)$ for a disjunctive GLRM $h$ can be lower than for, e.g., a linear model applied to the same features.

%
%
\section{MINTY: Rule Models that avoid Imputation of Missing Values}
\label{sec:methodoloy}
We aim to learn a small set of rules $\cS$ and coefficients $\beta$ that minimize the regularized empirical risk, with a small expected reliance on features with missing values. Let $\cK$ denote an index over \emph{all possible disjunctions} of $d$ binary features and let $\cS \subseteq \cK$ be the subset of rules used by our model, such that $k$ defines $z_k$ and thus $a_{ik}$ for all observations $i$. Then, let $\rho_{ik} = \mathds{1}[a_{ik} = \na]$ indicate the reliance of rule $k$ on missing values in observation $x_i$. 
With a parameter $\gamma \geq 0$ used to control the average reliance on missing features $\bar{\rho}_k$ for included rules, and a sparsity penalty $\lambda_k > 0$ for $k \in \cK$, we aim to solve,
\begin{equation}\label{eq:objective}
\min_{\beta, \cS} \frac{1}{n} \sum_{i=1}^n \left[(\beta^\top a_{i\cS} - y_i)^2 + \sum_{k \in \cS} (\gamma\rho_{ik} + \lambda_k)|\beta_k| \right]
\end{equation}

Following~\citet{wei2019generalized}, we use an $\ell_1$-penalty for controlling the size of the rule model, with parameter $\lambda_k = \lambda_0 + \lambda_1\|z_k\|_1$. The latter term counts the number of literals in disjunction $k$. By choosing $\lambda_0, \lambda_1$, we can control the number and size of rules used by the model.

If we let $\cS$ be the set of all possible disjunctions $\cK = \{0,1\}^d$, our learning problem reduces to a LASSO-like problem with active rules determined by the sparsity pattern in $\beta$, but with a number of rules and coefficients that grows exponentially with $d$. Even for moderate-size problems, these would be intractable to enumerate. Instead, we follow the column-generation strategy by~\citet{wei2019generalized}, which searches the space of disjunctions and builds up $S$ incrementally. 

The idea is to first solve problem~\eqref{eq:objective} restricted to a small set of candidate rules $\hcS = \cS_0$, in our case just the intercept rule. 
Given a current set of disjunctions $\hcS$ and estimated coefficients $\hbeta$, a new rule is added by finding the disjunction that aligns the most with the residual of the current model, $\bR = \bA_{\hcS}\hbeta - \bY$, where $\bA_{\hcS} = [a_{1\cdot}, \ldots, a_{n\cdot}]^\top$ is the matrix of rule assignments for all observations in the training set w.r.t. $\hcS$. 

This procedure is justified by the optimality conditions of \eqref{eq:objective} which imply that at an optimal solution, the partial derivative with respect to both the positive and negative components of $\beta$ must be non-negative. Optimality can therefore be determined by minimizing $\pm \frac{1}{n} \mathbf{R}^\top \ba + \mathcal{R}(a)$ over the corresponding activations of a new rule $\ba \in \{0,1\}^n$ (with $\mathcal{R}(a)$ corresponding to regularization terms, specified further below). 

To avoid computation with \na{} values, we zero-impute $X$, defining $\zx_{ij} = \mathds{1}[m_{ij}=0]x_{ij}$, keeping track of missing values in the mask $M$. In principle, other imputation could be used. 
We choose the next rule as defined by the minimizer $z^*$ of the following two problems ($\pm$),
\begin{equation}\label{eq:relaxed_minty}
\begin{aligned}
& \underset{\substack{ z\in \{0,1\}^d \\ a, \rho \in \{0,1\}^n }}{\text{minimize}}
\;\;\;\; \pm \frac{1}{n} \sum_{i=1}^n (r_i  a_i + \gamma \rho_i) + \lambda_0 + \lambda_1  \sum_{j=1}^d z_{j}  \\
 & \text{subject to }
 \;\;\; a_{i} = \sum_{k=1}^K \max(\zx_{ij}z_{j}) \\
 & \forall{i} : \rho_i = \underbrace{(1 - \max_j [(1-M_{ij})z_{j}\zx_{ij}])}_{(i)}\underbrace{(\max_j M_{ij}z_j)}_{(ii)}%
\end{aligned}%
\end{equation}%
We let $z_{k^*}, a_{k^*}, \rho_{k^*}$ refer to the optimizers of \eqref{eq:relaxed_minty}, for the sign with smallest objective value, and $\delta_{k^*}$ to the corresponding objective. The first constraint in~\eqref{eq:relaxed_minty} makes sure that rule activations $a_{i}$ correspond to a disjunction of literals $\zx_{ij}$ as indicated by $z$.  The constraint on $\rho_i$ ensures that reliance on missing factors is counted only when (i) there is no observed True literal in the rule, and (ii) at least one literal is missing.

\begin{algorithm}[tb]
    \caption{\MINTY{} learning algorithm}
    \label{alg:algorithm}
    \textbf{Input}: $\bX, \bM \in \{0,1\}^{n \times d}$, $\bY \in \bbR^n$ \\
    \textbf{Parameters}: $\lambda_0, \lambda_1, \gamma \geq 0, k_{max}\geq 1$ \\
    \textbf{Output}: $\cS$, $\beta$ 
\begin{algorithmic}[1] 
        \State Initialize $\hcS = \{0\}$ where $0$ is the intercept rule
        \State Initialize $\delta_{k*} = -\infty$
        \State Let $\zbX$ be zero-imputed $\bX$, $\zx_{ij} = \mathds{1}[m_{ij}=0]x_{ij}$
        \State Let $l = 0$
        \While{$\delta_{k*} < 0 $, $l < k_{max}$}
        \State $\beta \gets \argmin_\beta \cO(\zbX, \bY, \hcS, \lambda_0, \lambda_1, \gamma)$ \Comment{\eqref{eq:objective}}
        \State $a_{ik} = \max_{j\in [d]}z_{jk}\zx_{ij}$ for $i \in [n], k \in \hcS$
        \State $r_i = \sum_{k \in \hcS}\beta_ka_{ik} - y_i$ for $i \in [n]$
        \State $z_{k^*}, \delta_{k^*} \gets$ ADD$(\zbX, \bY, \bR, \lambda_0, \lambda_1, \gamma)$  \Comment{\eqref{eq:relaxed_minty}}
        \If{$\delta_{k^*} \geq 0$:} 
        \State \textbf{break}. The current solution is optimal.
        \Else
        \State Append new rule $k^*$ to $\hcS$,  
        \State $l \gets l+1$
        \EndIf
        \EndWhile
        \State $\hbeta \gets \argmin_\beta \cO(\zbX, \bY, \hcS, \lambda_0, \lambda_1, \gamma)$
        \State \textbf{return} $\hcS, \hbeta$ 
    \end{algorithmic}
\end{algorithm}

When no rule can be found with a negative solution to \eqref{eq:relaxed_minty}, or a maximum number of rules $k_{max}$ has been reached, the algorithm terminates. We finish by solving \eqref{eq:objective} with respect to $\beta$ for fixed $\hat{\mathcal{S}}$. The algorithm can be adapted to generalized linear models like logistic regression, without changing the rule generation procedure, as shown by~\citet{wei2019generalized}. 
We summarize our method, referred to as \MINTY{}, in Algorithm~\ref{alg:algorithm}.

%
%
\subsection{Solving the Rule Generation Problem}
\label{sec:minty_opt}
The problem in \eqref{eq:relaxed_minty} is an integer linear program with nonlinear constraints. We consider two methods in experiments: Exact solutions using the off-the-shelf optimization toolkit Gurobi~\citep{gurobi}, and approximate solutions using a heuristic beam search algorithm, as used by~\citet{oberst2020characterization}. 

For the beam search algorithm, we initialize the beam to contain all disjunctions of a single literal. We then retain the top-$W_{b}$ of these, in terms of the objective in~\eqref{eq:relaxed_minty}. Then, we generate the next set of candidates by adding one literal to all disjunctions, and evaluate these in the same way, retaining the top-$W_{b}$ and proceeding in the same way until at most $D_{b}$ literals have been added. Throughout, we keep track of the rule with the smallest objective, no matter its size, and return this once the beam has reached its maximum depth. In experiments, we let the beam width be $W_b=d$ (the number of features) and depth $D_b=7$. The time complexity of the search is linear in $W_bD_b$.

%
%
\subsection{\MINTY{} in the Limits of Regularization $\gamma$}~\label{subsec:minty_limits}
In our proposed method, we penalize reliance on missingness to disjunctive linear rule models, controlling the emphasis on observed literals within rules, with the parameter $\gamma \geq 0$. In the low limit, $\gamma = 0$, MINTY is equivalent to a disjunctive linear rule model with zero-imputation. In stationary environments, where $p(X, M, Y)$ doesn't change between training and testing, for sufficiently large data sets, learning with $\gamma=0$ will result in the smallest error in general since this imposes the least constraints on the solution. This comes at the cost of reduced interpretability by relying on features with missing values in prediction. 

In the limit $\gamma \rightarrow \infty$, MINTY imposes a hard constraint that no rule should be included in the model unless it can be evaluated for every example in the training set without relying on imputed values, 
$\forall i,k : \rho_{ik} = 0$.
This could be appropriate in settings where there are \emph{some} features that are never missing and would be preferred over features that are predictive but rarely measured. However, if any configuration $m \in \{0,1\}^d$ of missing values is possible, MINTY will return an empty set of rules in the large-sample limit. 

\begin{thmobs}
If the all-missing configuration has positive marginal probability, $\exists \epsilon>0 : p(M=\boldsymbol{1}_d) > \epsilon$, the set of rules which have at least one literal measured for every example in the training set vanishes almost surely with growing number of samples $n$; there is no non-trivial GLRM $h$ with $\bar{\rho}(h)=0$ then. An important special case of this is the Missingness-Completely-At-Random (MCAR) mechanism~\citep{rubin1976inference} with missingness probability $q > \epsilon^{1/d}$.
\end{thmobs}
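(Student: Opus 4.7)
The plan is to reduce the claim to a simple Borel--Cantelli style argument about the occurrence of the all-missing pattern in the training set. I would proceed as follows.

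First, I would fix any non-trivial rule, i.e.\ a disjunction $k$ with $\|z_k\|_1 \geq 1$, and unpack the definition of $\rho_{ik}$ from Section~2.2: we have $a_{ik}=\na$, and hence $\rho_{ik}=1$, whenever for every literal $j$ included in the rule either $m_{ij}=1$ or $x_{ij}=0$. In particular, if the training example $i$ happens to satisfy $m_i=\boldsymbol{1}_d$ (every feature missing), then for every $j$ with $z_{jk}=1$ we trivially have $m_{ij}=1$, so $\rho_{ik}=1$ regardless of $z_k$. Thus the event \emph{``there exists a non-trivial rule $k$ with $\rho_{ik}=0$ for all $i=1,\dots,n$''} is contained in the event \emph{``no training example has $m_i=\boldsymbol{1}_d$''}.

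Next, I would bound the probability of that containing event. Since the $m_i$ are i.i.d.\ draws from the marginal of $M$ under $p$, and by assumption $p(M=\boldsymbol{1}_d)>\epsilon$, we have
\begin{equation*}
\Pr\!\left(\forall i \in [n] : m_i \neq \boldsymbol{1}_d\right) \;=\; \bigl(1-p(M=\boldsymbol{1}_d)\bigr)^n \;\leq\; (1-\epsilon)^n.
\end{equation*}
Since $\sum_{n\geq 1}(1-\epsilon)^n<\infty$, the first Borel--Cantelli lemma implies that almost surely, only finitely many $n$ fail to contain an all-missing example. Equivalently, with probability one there exists $N$ such that for all $n\geq N$, some $m_i=\boldsymbol{1}_d$ appears in the training set, and by the previous paragraph every non-trivial rule then has positive training reliance. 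Since a GLRM $h$ with $\bar\rho(h)=0$ requires $\rho_{ik}=0$ for every $i$ and every included rule $k$, no such $h$ can contain any non-trivial disjunction -- only the intercept rule survives, which is precisely the trivial model.

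Finally, for the MCAR specialization I would note that under MCAR each coordinate $M_j$ is independent Bernoulli with $\Pr(M_j=1)=q$, so $p(M=\boldsymbol{1}_d)=q^d$, and the hypothesis $p(M=\boldsymbol{1}_d)>\epsilon$ translates directly into the stated bound $q>\epsilon^{1/d}$. I do not foresee a real obstacle here; the only point that requires care is making explicit that ``non-trivial'' means at least one rule with a nonempty disjunction, so that the reduction to the all-missing event is valid and the intercept-only model is correctly excluded from the conclusion.
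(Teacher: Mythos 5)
Your argument is correct. Note that the paper states this result as an \emph{Observation} and supplies no proof of it anywhere (the supplement proves only Proposition~1), so there is no authorial argument to compare against; your write-up is the natural formalization of what the authors leave implicit. The key reduction --- that an all-missing training example forces $\rho_{ik}=1$ for every rule containing at least one literal, so the event that some non-trivial rule is always evaluable is contained in the event that no $m_i=\boldsymbol{1}_d$ occurs --- is exactly right, and the geometric bound $(1-\epsilon)^n$ plus Borel--Cantelli (or simply the monotone limit of the nested events) gives the almost-sure vanishing. Your handling of the MCAR case and of the word ``non-trivial'' is also correct. One small remark: since the paper defines $\bar{\rho}(h)=\E_{X\sim p}[\rho(X)]$ as a population expectation, the clause ``there is no non-trivial GLRM with $\bar{\rho}(h)=0$'' follows even more directly, without any sampling argument, from $\bar{\rho}(h)\geq p(M=\boldsymbol{1}_d)>\epsilon$ for any $h$ containing a non-intercept rule; your training-set argument addresses the first clause of the observation (the empirical constraint $\forall i,k:\rho_{ik}=0$ enforced by $\gamma\to\infty$), and it is worth keeping the two readings distinct.
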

In other words, requiring \emph{perfect} variable redundancy through rules is too strict for many settings. Instead, we can aim to \emph{limit} or \emph{minimize} the reliance on missing values $\bar{\rho}$ by selecting a bounded $\gamma > 0$.

\subsection{Comparison With a Linear Model Trained on Complete Data}
In many applications, interpretable risk scores trained on complete cases are deployed in settings where features are occasionally missing, necessitating the imputation of missing values with a constant, often 0 for binary variables. One example is the APACHE family of clinical risk scores~\citep{afessa2005influence,haniffa2018performance}. It is natural to compare the bias of this approach to the bias of a model with inherently low reliance on missing values. Below, we do this for the case where the true outcome is a linear function and the variable set has a natural redundancy.

Assume that the outcome $Y$ is linear in $X \in \{0,1\}^d$ and has noise of bounded conditional variance,
$$
Y = \beta^\top X + \epsilon(X), \mbox{ where }\; \E[\epsilon \mid X] = 0,  \V[\epsilon \mid X] \leq \sigma^2~,
$$
with $\beta \in \mathbb{R}^d$. Next, assume that $X$ has the following structure. For each $X_i$ there is a paired ``replacement'' variable $X_{j(i)}$, with $j(j(i)) = i$, such that for $\delta \geq 0$, $p(X_i = X_{j(i)}) \geq 1-\delta$, and that whenever $X_i$ is missing,  $X_{j(i)}$ is observed, $M_i = 1 \Rightarrow M_{j(i)}=0$. Assume also that $\forall i, k \not\in \{i, j(i)\} : X_i \indep X_k$. 
    
\begin{thmprop}\label{prop:bias}
    Under the conditions above, there is a GLRM $h$ with $d$ two-variable rules $\{\zX_i \lor \zX_{j(i)} \}_{i=1}^d$, where $\zX_i = (1-M_i)X_i$, with expected the squared error 
    $$
    R(h) \leq \delta \|\beta\|_2^2 + \delta^2 \sum_{i,k\not\in\{i, j(i)\}}|\beta_i \beta_k| + \sigma^2~.
    $$
    Additionally, if $\beta_i \geq 0$ and $\E[X_iM_i] \geq \eta$ for all $i \in [d]$, using the ground truth $\beta$ (the ideal complete-case model) with zero-imputed features $\zX$ results in an expected sequared error bounded from below as 
    $$
    R(\beta) \geq \eta \|\beta\|_2^2 + \sigma^2~, 
    $$
    and a greater missingness reliance than the GLRM, $\bar{\rho}(\beta) \geq \bar{\rho}(h)$. Thus, with $a = \|\beta\|_2^2/\sum_{i,k\not\in\{i, j(i)\}}|\beta_i \beta_k|$, the GLRM is preferred when $\delta < (\sqrt{a^2 + 4\eta} - a)/2$.
\end{thmprop}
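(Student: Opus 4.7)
The plan is to construct the GLRM explicitly, decompose its squared error into a sum over pairs of indices, bound each type of term separately, and then match this with a lower bound for the complete-case linear model run with zero imputation.

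I would take $h(X,M)=\sum_{i=1}^d\beta_i A_i$ with rules $A_i=\bar X_i\lor\bar X_{j(i)}$; note that because the disjunction is symmetric, $A_{j(i)}=A_i$. Writing the pointwise residuals $\xi_i:=A_i-X_i\in\{-1,0,1\}$, we have $h-Y=\sum_i\beta_i\xi_i-\epsilon$. Assuming $\epsilon\indep M\mid X$ (a mild strengthening of the stated noise condition that the proposition implicitly relies on), the cross term with $\epsilon$ vanishes, so $R(h)=\sum_{i,k}\beta_i\beta_k\,\E[\xi_i\xi_k]+\E[\epsilon^2]$. I would then bound $\E[\xi_i\xi_k]$ in three regimes. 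For $k=i$, a short case analysis over $(M_i,M_{j(i)})\in\{(0,0),(1,0),(0,1)\}$ (using $M_i+M_{j(i)}\le 1$) shows that $\xi_i\ne 0$ forces $X_i\ne X_{j(i)}$, so $\E[\xi_i^2]\le\delta$ and the diagonal sums to $\delta\|\beta\|_2^2$. For $k\notin\{i,j(i)\}$, the two coordinate blocks are independent by hypothesis, so $\xi_i\indep\xi_k$ and $|\E[\xi_i\xi_k]|\le\delta^2$, giving the second term of the bound.

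The main obstacle is the cross term for $k=j(i)$ with $k\ne i$: a generic bound here is only $O(\delta)$, which would swamp the $\delta^2$ term and break the stated inequality. The fix is to use $A_{j(i)}=A_i$, so $\xi_{j(i)}=A_i-X_{j(i)}$, and then run through the same three configurations of $(M_i,M_{j(i)})$. When both coordinates are observed, $A_i=\max(X_i,X_{j(i)})$ agrees with either $X_i$ or $X_{j(i)}$, so at least one of $\xi_i,\xi_{j(i)}$ vanishes; when exactly one is missing, $A_i$ equals the observed side, so the corresponding $\xi$ is identically zero. In every case $\xi_i\xi_{j(i)}\equiv 0$ pointwise, so the pair cross terms drop out completely and the upper bound follows after collecting $\E[\epsilon^2]\le\sigma^2$.

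For the lower bound on $\beta$ I would write $\sum_i\beta_i\bar X_i-\beta^\top X=-\sum_i\beta_iM_iX_i$, giving $R(\beta)=\E[(\sum_i\beta_iM_iX_i)^2]+\E[\epsilon^2]$; since $\beta_i\ge 0$ and $M_iX_i\ge 0$, every cross term is non-negative and can be dropped to obtain $R(\beta)\ge\sum_i\beta_i^2\,\E[M_iX_i]\ge\eta\|\beta\|_2^2$ plus the noise term (I would flag that placing $\sigma^2$ on the lower bound tacitly identifies $\E[\epsilon^2]$ with $\sigma^2$; this is harmless if the conditional variance is constant, and it cancels in the comparison anyway). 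The reliance claim is immediate: any missing coordinate already forces imputation for $\beta$, while $h$ only triggers imputation when both sides of some pair are NA-or-False, so $\bar\rho(h)\le\bar\rho(\beta)$ pointwise. Finally, the GLRM beats $\beta$ whenever the upper bound on $R(h)$ lies below the lower bound on $R(\beta)$, i.e.\ $\delta\|\beta\|_2^2+\delta^2\sum_{i,k\notin\{i,j(i)\}}|\beta_i\beta_k|<\eta\|\beta\|_2^2$; solving the resulting quadratic in $\delta$ yields the stated threshold, up to a routine algebraic check of the normalization constant $a$. The one piece of real content beyond bookkeeping is the identity $\xi_i\xi_{j(i)}\equiv 0$, which is what converts the naive per-feature error rate $\delta$ into the per-pair rate that makes the GLRM competitive.
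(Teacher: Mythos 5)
Your proof follows essentially the same route as the paper's: the same decomposition of the risk into a bias term plus noise, the same per-coordinate bias variable (your $\xi_i$ is the paper's $\Delta_i$), the same key identity $\xi_i\xi_{j(i)}\equiv 0$ for paired indices, independence for the remaining cross terms, and the same lower bound for the zero-imputed ground-truth model. If anything you are slightly more careful than the paper, whose case analysis of $\Delta_i$ omits the value $-1$ (arising when $X_i=1$ is missing and $X_{j(i)}=0$) and which does not flag the implicit assumptions on $\epsilon$ that you note.
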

A proof is given in the supplement. 

By Proposition~\ref{prop:bias}, there are data-generating processes for which a disjunctive GLRM has a strictly smaller risk and smaller reliance on features with missing values than the ground-truth linear rule model used with zero imputation. For simplicity, the result is written for rules involving pairs of variables that are internally strongly correlated and independent of other pairs but can be generalized to disjunctions of variables in cliques of any size with the same property.  

%
%
\section{Empirical Study}%
\label{sec:experiments}%
We evaluate the proposed \MINTY{} algorithm\footnote{The code to reproduce the experiments at \url{https://github.com/Healthy-AI/_minty}} 
on synthetic and real-world data, aiming to answer three main questions: i) How well can we learn rules when covariates are missing at training and test time? ii) How does the accuracy of \MINTY{} compare to baseline models; iii) How does regularizing reliance on missing values affect performance and interpretability?

%
%
\subsection{Experimental Setup}
In our experiments, we solve the rule-generation subproblem of \MINTY{} using beam search, as described in Section~\ref{sec:minty_opt}. In the supplement,  
we use a small synthetic data set to show that the predictive performance differs only minimally compared to solving the ILP in \eqref{eq:relaxed_minty} exactly using Gurobi~\citep{gurobi}. To find optimal coefficients $\beta$, given rule definitions $S$, we use the \LASSO{} implementation in scikit-learn~\citep{sklearn_api}, re-weighting covariates to achieve variable-specific regularization. 

The objective function regularizes each rule $z_{\cdot k}$ with strength $\lambda_k = \lambda_0 + \lambda_1\|z_{\cdot k}\|_0$, limiting the reliance on missingness by minimizing the number of rules using zero-imputed features. The values of $\lambda_0$ and $\lambda_1$ range within $[10^{-3}, 0.1]$. We choose their best values through a grid search based on their validation set performance.
The values for $\gamma$ were chosen from $[0, 10^{-7}, 10^{-3}, 0.01, 0.1, 10000]$. We present the result for several values of $\gamma$, to illustrate the tradeoff between performance and reliance on missing values. 

We compare \MINTY{} to the following baseline methods: Imputation + logistic regression \LASSO{}, Imputation + Decision Tree \DT{}, XGBoost (\XGB{}), where missing values are supported by default~\citep{chen2019package}. Last, we compare the NeuMiss network, \NEUMISS{} that proposes a new type of non-linearity: the multiplication by the missingness indicator~\citep{morvan2020neumiss}.
Hyperparameters are chosen on the validation set performance. For imputation, we use zero ($I_0$), or the iterative imputation method, called MICE ($I_{mice}$) from SciKit-Learn~\citep{scikit-learn_imp, buren}, that replaces missing values with multiple imputations using a regression model. MICE was performed over 5 iterations. Details about method implementations, hyperparameters, and evaluation metrics are given in Supplement~\ref{app:baselines}. 

We report the mean square error (MSE) and the $R^2$ score (coefficient of determination) for all methods. Statistical uncertainty of the metrics is estimated using standard $95\%$-confidence intervals over the test set. Additionally, we estimate the reliance on features with missing values, $\bar{\rho}$ of all methods on the test sets. For \LASSO{}, this counts the fraction of observations with missing values among the features with non-zero coefficients. For \DT{}, we report the fraction of inputs with a feature that is both missing (and thus imputed), and used in a split to decide that inputs prediction. For \XGB{}, we do the same, but count observations for which \emph{any} of the trees rely on a missing value. \NEUMISS{} uses all variables for prediction, and so $\bar{\rho}$ measures the fraction of observation with at least one missing value. For \MINTY{}, we define $\bar{\rho}$ as explained in Section~\ref{sec:methodoloy}.

\paragraph{Real-world Data Sets}
We used three different data sets for regression tasks. The first data set, \textit{ADNI}, is sourced from the Alzheimer's Disease Neuroimaging Initiative (ADNI) database and involves predicting the outcome of the ADAS13 cognitive test at a 2-year follow-up based on baseline data. The data set, \textit{Life}, aims to predict life expectancy from various factors, including immunization, mortality, economic, and social factors~\citep{owidlifeexpectancy}. Last, the data set \textit{Housing} involves predicting property prices in Ames, Iowa, using physical attributes and geographical features~\citep{de2011ames}. The data sets are discretized and used with binary data in the baselines and for \MINTY{}. More details can be found in the Appendix. We split the data randomly into a test set (20\%) and a training set (80\%), and then withhold a validation portion (20\%) of the training set for selecting hyperparameters. We average results over 10 seeds. 

\paragraph{Missing Values}\ADNI{} has incomplete entries natively, indicated in results as ``(Natural)'' missingness. We added missing values to \Life{} and \Housing{} according to the Missing Completely at Random (MCAR) mechanism, where the probability that a feature $X_j$ has a missing value is $q$, independent of other variables. In our experiments, we set $q$ to 0.1. The same mechanism is used both during training and testing. 

\paragraph{Synthetic Data}
In the Supplement, we also apply our algorithms for synthetic data where $n=5000$ samples of $c=30$ features are drawn from independent Bernoulli variables. Then, for each variable $X_i$, $i\in [c]$ a ``replacement variable'' $X_{c+i}$, is added which has the same value as $X_i$ with probability $0.9$. The outcome $Y$ is a linear combination of all features with added noise. Missingness can be added either with either MCAR, Missing-Not-At-Random (MNAR), or Missing-At-Random (MAR) mechanisms using the implementation by~\citet{Mayer2019}. 

\subsection{Results}\label{sec:results}
We report the predictive performance of all models and their reliance on features with missing values in Tables~\ref{tab:results_real-world_adni_housing}--\ref{tab:results_real-world_life}, and comment on their interpretability. 

\begin{table*}[t]
\centering
\caption{Performance results for the real-world data sets \ADNI{} and \Housing{}. For \MINTY{} using \ADNI{} we use $\lambda_0 = 0.001, \lambda=_1 = 0.01$, and for \Housing{} we choose $\lambda_{0}=0.01$, $\lambda_{0}=0.001$ based on a 0.1 missingness proportion in the data. $^*$Training of \NEUMISS{} for \Housing{} only finished 3 out of 10 seeds.} \label{tab:results_real-world_adni_housing}%
\begin{small}%
\begin{tabular}{l|lll|lll}
\toprule
& \multicolumn{3}{c|}{ADNI (Natural)} & \multicolumn{3}{c}{HOUSING (MCAR)} \\
\textbf{Model} &\textbf{$R^2$} & \textbf{MSE} & $\bar{\rho}$ &\textbf{$R^2$} & \textbf{MSE} & $\bar{\rho}$  \\
\midrule 
\LASSO{}$_{I_{mice}}$   &  0.38 (0.29, 0.47) & 0.63 (0.51, 0.78) & 0.55  &  0.58 (0.50, 0.65)&  0.44 (0.33, 0.54)  & 0.99   \\
\DT{}$_{I_0}$ &  0.57 (0.49, 0.65) & 0.44 (0.33, 0.55)  &  0.08  &  0.61 (0.54,  0.68) & 0.40 (0.30, 0.50) &  0.22 \\
\XGB{} &  0.58 (0.48, 0.64) & 0.45 (0.33,  0.56)  & 0.55  & 0.69  (0.63, 0.76) &  0.31 (0.22, 0.40) & 0.84 \\ 
\NEUMISS{} & 0.52 (0.44, 0.60) & 0.49 (0.37, 0.61) & 0.55 & 0.69 (0.62,   0.75)$^*$&  0.34 (0.24, 0.43)$^*$ &  0.99$^*$\\
\MINTY{}$_{\gamma = 0}$  & 0.64 (0.56, 0.70) &  0.37 (0.27, 0.47) &  0.40 & 0.72 (0.66, 0.78) & 0.29 (0.20, 0.37)  & 0.61  \\
\MINTY{}$_{\gamma = 0.01 \text{(A)}, \gamma = 0.1 \text{(H)}}$  & 0.63 (0.56, 0.70) & 0.38  (0.27, 0.48) & 0.28 & 0.72 (0.66, 0.78) &  0.29 (0.20, 0.37) & 0.48 \\
\MINTY{}$_{\gamma =1e4}$  & 0.62 (0.55, 0.70) &  0.38 (0.27,   0.48) & 0.0 & 0.46 (0.38, 0.55) &  0.55 (0.43, 0.67) & 0.0  \\
\bottomrule
\end{tabular}
\end{small}
\end{table*}


\begin{table}[t]
\centering
\begin{small}%
\caption{Performance results for real-world data set \Life{} with $\lambda_0 = 0.01$, $\lambda_1= 0.01$ for \MINTY{}. The missingness proportion is 0.1. For the synthetic data, we used $\lambda_0 = 0.01$, $\lambda_1= 0.01$ for the extreme cases of \MINTY{} and $\lambda_0 = 0.001$ for $\gamma=0.1$. The missingness proposition of 0.1 together with 0.1 for replacement disagreement probability}%
\label{tab:results_real-world_life}%
\tabcolsep=0.11cm
\begin{tabular}{llll}
\toprule
\multicolumn{4}{c}{LIFE (MCAR)} \\
\textbf{Model} &\textbf{$R^2$} & \textbf{MSE} & $\bar{\rho}$ \\
\midrule 
\LASSO{}$_{I_0}$   & 0.87 (0.84, 0.90) &  11.3 (10.9, 11.7)  &  0.71  \\
\DT{}$_{I_0}$ & 0.88 (0.85, 0.91) & 10.4 (10.0, 10.8) &  0.32  \\
\XGB{} &  0.94 (0.92, 0.96) &  5.14 (4.87, 5.40)&  0.82   \\ 
\NEUMISS{} & 0.82  (0.79, 0.85) & 15.9 (15.4, 16.3) &  0.82 \\
\MINTY{}$_{\gamma=0}$ &  0.91  (0.88, 0.93) & 8.22 (7.89,   8.55) & 0.73    \\
\MINTY{}$_{\gamma=0.5}$   &  0.90 (0.88, 0.93)& 8.37 (8.03, 8.70) & 0.61  \\
\MINTY{}$_{\gamma=1e4}$ & 0.00  (-0.08,  0.08) & 88.0 (86.9, 89.1) & 0.00   \\
\bottomrule
\end{tabular}%
\end{small}%
\end{table}

\MINTY{} achieves the best (\ADNI{}, \Housing{}) or second-best (\Life{}) held-out predictive performance (high $R^2$, low MSE), while relying substantially less on features with missing values in the test set (smaller $\bar{\rho}$) than models with comparable predictive accuracy. On \ADNI{}, a \MINTY{} model with $\bar{\rho}=0$ achieves better $R^2$ than \XGB{} and \NEUMISS{} models for which more than 50\% of the test samples must use default rules or be imputed, respectively. This confirms that it is possible to learn to avoid imputation to a large degree while maintaining a competitive model. We see similar results on \Housing{} and \Life{}, despite the missingness being unstructured in these examples (MCAR). In Supplement Tables~\ref{app_tab:results_synth_MCAR}--\ref{app_tab:results_synth_MAR_MNAR}, we compare all models on synthetic data in MCAR, MAR, and MNAR settings and see that $\MINTY{\gamma=0.01}$ is among the best-performing models regardless of the missingness mechanism. 

We note that \XGB{} (tree ensemble) and \NEUMISS{} (multi-layer neural network) support prediction with missing values natively and perform well in all tasks, but can be difficult to interpret due to their large size and/or black-box nature. In Supplement Figure~\ref{app:fig_complexity}, we report the $R^2$ values on \ADNI{}, together with estimator-specific measures of complexity. These results, the results in Tables~\ref{tab:results_real-world_adni_housing}--\ref{tab:results_real-world_life}, and the model description in Table~\ref{tab:customized_descrptions_ADNI} confirm that \MINTY{} can be used to learn (more) interpretable models while handling missing values at test time. Notably, \DT{} also relies less on missing values than other baselines, simply because not every variable will be used to compute the prediction for every test instance. This suggests that building trees with regularization for $\rho$ could be a useful future investigation.

\paragraph{The Impact of Regularizing $\bar{\rho}$ with $\gamma>0$} 

For all data sets, we show that there are values of $\gamma>0$ such that $\MINTY{}_{\gamma>0}$ and $\MINTY{}_{\gamma=0}$ differ only slightly in their $R^2$ and MSE values but where $\MINTY{}_{\gamma>0}$ shows substantially lower reliance on imputation. For example, on \Housing{}, $\MINTY{}_{\gamma=0.1}$ achieves the same $R^2$ as $\MINTY{}_{\gamma=0}$ but with reliance $\bar{\rho}=0.48$ compared to $\bar{\rho}=0.61$ for the unregularized model. As remarked previously, achieving $\bar{\rho}=0$ with non-trivial predictive performance is not always possible: on \Life{}, the upper extreme of $\gamma = 10000$ results in an uninformative model, since there were no rules which were always determined by observed values other than the intercept.

In Figure~\ref{fig:reliance}, we show the results of \MINTY{} with $\gamma \in [10^{-6}, 1000]$ swept over a log-scale of 20 values from this set. For $\gamma = 1000$, the model is disallowed any use of missing values in the rules ($\bar{\rho} = 0$), which leads to worse predictive performance (bottom left in Figure). In the top right corner, we set results for $\gamma =0$ which results in the best predictive performance, but the highest reliance on missing values. Regularizing reliance on missing values moderately ($\gamma=0.01$) leads to a good balance of predictive accuracy $(R^2=0.63)$ and reliance on imputation $(\bar{\rho} = 0.28)$.

\begin{figure}[t]
\includegraphics[width=8cm]{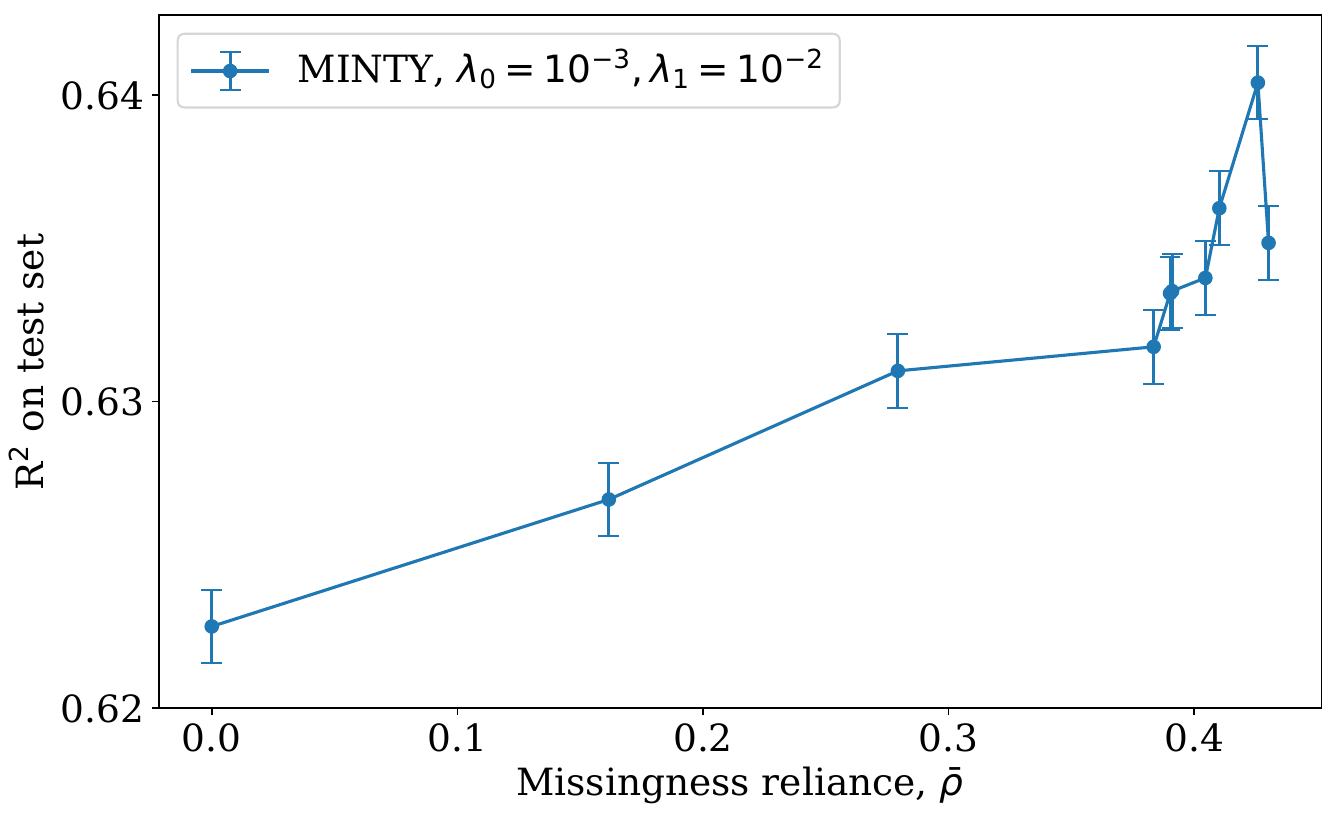}
\caption{Predictive performance ($R^2$) and reliance on features with missing values $\bar{\rho}$ on \ADNI{} for \MINTY{} with $\gamma$ chosen from a log scale over $[10^{-6}, 10^3]$.}
\label{fig:reliance}
\end{figure}

\subsection{Interpreting Learned Rules on \ADNI{}}
In Table~\ref{tab:customized_descrptions_ADNI}, we visualize the models learned by \MINTY{} on \ADNI{}, in the style of risk scores used in medicine or criminal justice, see, e.g., \citet{ustun2019learning}. On the left are rule definitions and on the right, their coefficients---the score added if the rule is true. The scores for each active rule are summed together with the intercept to form a prediction. The top table represents the learned set of rules using \MINTY{}$_{\gamma=0}$ and the bottom one for \MINTY{}$_{\gamma=0.01}$. 

In the \ADNI{} task, the goal is to predict the cognitive decline measured by a change in the cognitive test score ADAS13 (high score means low cognitive ability, a positive change means deteriorating ability) from baseline to a 2-year follow-up. The learned coefficients match expectations as, for example, diagnoses for Alzheimer's disease (AD) or mild cognitive impairment (LMCI) are associated with higher cognitive decline (positive coefficients). Similarly, \texttt{MMSE $\geq$ 29} (normal cognitive ability) is associated with smaller decline in ADAS13 (negative coefficient).

The two models with $\gamma=0$ and $\gamma=0.01$ learn similar rules with similar coefficients but with different reliance on features with missing values ($\bar{\rho}=0.40$ vs $\bar{\rho}=0.28$). The rules, \texttt{TAU $\leq$ 191.1 OR Hippocampus $\geq$ 7721.0}  and \texttt{FDG $\leq$ 1.163} are not included in the second model ($\gamma=0.01)$, since they are missing for $0.33\%$ and $0.27\%$ of all individuals in the data set. By using a higher $\gamma$ we achieve a more robust solution with less dependence on imputed values. 

For \MINTY{}$_{\gamma=0.1}$, which achieves $\bar{\rho} = 0$, shared in Table~\ref{tab:customized_descrptions_ADNI_minty01} in the Supplement, we see that the learned rules contain mostly features that are \emph{always} measured such as demographics and cognitive test scores, following the constraint that rules should not be included unless it can be evaluated for every example. We also show an example in Table~\ref{app_tab:customized_descrptions_synth} in the Supplement, where the true rules produced by synthetic data are recovered. 

\begin{table}[t]
\centering
\caption{\MINTY{} models learned on \ADNI{} using $\gamma=0$ (top) and $\gamma=0.01$ (bottom). The $R^2$ for the two models were $0.64$ and $0.63$ respectively, the latter with smaller reliance on features with missing values ($\bar{\rho}=0.28$ vs $\bar{\rho}=0.40$). Two rules in the top model are not in the bottom model due to more frequent missingness; the bottom model adds two rules with less missingness.}%
\label{tab:customized_descrptions_ADNI}%
\begin{small}
\begin{tabular}{l| c}
\toprule
Rules by \MINTY{} with $\gamma = 0$ & Coeff.  \\
\midrule
AD diagnosis OR  LMCI diagnosis & +0.35\\
MMSE $\leq$ 26.0 OR  LMCI diagnosis   &  +0.23 \\ 
LDELTOTAL $\leq$ 3.0     &  +0.63 \\
AD diagnosis      &   +0.65 \\
Hippocampus $\leq$ 6071.0 OR Sex = Male  & +0.18 \\
MMSE $\geq$ 29.0 &   $-$0.16 \\
Entorhinal $\leq$ 3022.0  &     +0.18 \\
LDELTOTAL score $3 - 8$   & +0.27 \\
{\color{red} TAU $\leq$ 191.1 OR Hippocampus $\geq$ 7721.0}  & {\color{red} $-0.19$ } \\
{\color{red} FDG $\leq$ 1.163}    & {\color{red}+0.17} \\
\midrule
Intercept &  -0.57\\ 
\bottomrule
\multicolumn{2}{c}{} \\
\toprule
Rules by \MINTY{} with $\gamma = 0.01$ & Coeff.  \\
\midrule
AD diagnosis OR LMCI diagnosis & +0.36\\
MMSE $\leq$ 26.0 OR  LMCI diagnosis   &  +0.22 \\ 
LDELTOTAL $\leq$ 3.0     &  +0.67\\
AD diagnosis      &   +0.68 \\
Hippocampus $\leq$ 6071.0 OR Sex = Male  & +0.19 \\
MMSE $\geq 29$ & $-$0.17 \\
Entorhinal $\leq$ 3022.0  &     +0.17 \\
LDELTOTAL score $\in [3,8]$   & +0.28 \\
{\color{blue}Hippocampus $\geq$ 7721.0}  & {\color{blue}-0.16} \\
{\color{blue} APOE4 $ = 1$} & {\color{blue}+0.08} \\
\midrule
Intercept &  -0.61\\ 
\bottomrule
\end{tabular}%
\end{small}%
\end{table}

%
%
\section{Related Work}
The rich literature on learning from data with missing values, see e.g.,~\citet{little2019statistical,Mayer2019}, studies both a) settings in which complete inputs are expected at test time but have missing values during training, and b) predictive settings where missing values are expected also during testing~\citep{josse2019consistency}. Studies of the first category have produced impressive results that give inference guarantees under different missingness mechanisms, such as MCAR, MAR, MNAR~\citep{rubin1976inference} and have often focused on imputing missing values with model-based techniques~\citep{buren}. Our work falls firmly in the second category, born out of supervised learning: rather than assuming that a particular mechanism generated missingness, we assume that the mechanism is preserved at test time~\citep{josse2019consistency}. 

Two common strategies in our setting are to i) impute-then-regress---to impute missing values and proceed as if they were observed, or ii) build models that explicitly depend on the missingness mask $M$, indicators for missing values~\citep{little2019statistical}. The former approach can introduce avoidable bias even with powerful imputation methods in the setting where values are missing in the same distribution during testing as during training ~\citep{le2021sa}. \citet{josse2019consistency} showed that pairing constant imputations, for example with 0, with a sufficiently expressive model leads to consistent learning. A drawback of this is that the optimal imputation or regression models are often complex and challenging to interpret.

The second strategy resulted in diverse methods, many of which incorporate the missingness mask in deep learning~\citep{bengio1995recurrent,che2018recurrent, lemorvan20a_linear, nazabal2020handling}. More recently, NeuMiss networks~\citep{morvan2020neumiss} introduced a deep neural network architecture that applies a nonlinearity based on the missingness mask to learn Bayes optimal linear predictive models. Another approach is the so-called Missing Incorporated in Attribute (MIA)~\citep{twala2008good} which uses missingness itself as a splitting criterion in tree learning, as used by e.g., XGBoost~\citep{chen2016xgboost}. A drawback of these methods is that they are difficult to interpret due to their complexity. In concurrent work, \citet{chen2023missing} addressed missing values with explainable machine learning but focused on a different model class from ours, using explainable boosting machines (EBMs) to gain insights without relying on imputation or specific missingness mechanisms. 
%
%
\section{Conclusion}
We have proposed \MINTY{}, a generalized linear rule model that mitigates reliance on missing values by a) using disjunctive rules whose values can be computed as long as one of its literals is observed and true, and b) regularizing the inclusion of rules whose values can frequently not be determined. We demonstrated in experiments on real-world data that \MINTY{} often has similar accuracy to black-box estimators and outperforms competitive baselines while maintaining interpretability and minimizing the reliance on missing values. 
%
\section*{Acknowledgements}
This work was partly supported by WASP (Wallenberg AI, Autonomous Systems and Software Program) funded by the Knut and Alice Wallenberg foundation. 

The computations were enabled by resources provided by the Swedish National Infrastructure for Computing (SNIC) at Chalmers Centre for Computational Science and Engineering (C3SE) partially funded by the Swedish Research Council through grant agreement no. 2018-05973.

%
%
%

\clearpage
\bibliography{references} 

\clearpage
%
%
\appendix

\section{Supplementary Material}

\subsection{Computing Infrastructure}
The computations required resources of 4 compute nodes using two Intel Xeon Gold 6130 CPUS with 32 CPU cores and 384 GiB memory (RAM). Moreover, a local disk with the type and size of SSSD 240GB with a local disk, usable area for jobs including 210 GiB was used. 
Inital experiments are run on a Macbook using macOS Monterey with a  2,6 GHz 6-Core Intel Core i7 processor. 

\subsection{Baseline models}~\label{app:baselines}
The baselines are trained by the following parameters. The best values for these hyperparameters are chosen based on the validation test set. 

\paragraph{\LASSO{}:} The values of alpha indicating a $\ell_1$ regularization term on weights range within $[0.1, 0.6]$, where increasing this value will make model more conservative. We allow to fit an intercept and set the precompute parameter to \textit{TRUE} to get the precomputed Gram matrix to speed up calculations~\citep{sklearn_api}. \LASSO{} is trained with zero and MICE imputation and chosen based on the validation performance. 
\paragraph{\XGB{}{}:} 
In \XGB{} we range the learning rate ($\eta$) between $[0.2, 0.3]$ where the shrinking step size is used in the update to prevent overfitting. After each boosting step, we can directly get the weights of new features, and $\eta$ shrinks the feature weights to make the boosting process more conservative.
The maximum depth of the trees is set to 4 since increasing this value will make the model more complex and more likely to overfit~\citep{chen2016xgboost}. 
The hyperparameters $\lambda$ represent the $\ell 2$ regularization term on weights and $\alpha$ indicates the $\ell 1$ regularization term. We set $\lambda$ to 0.5 and $\alpha$ to 0.2. Increasing this value will make a model more conservative. \XGB{} does not rely on imputation and chooses a default direction for missing values learned during training. 
\paragraph{\DT{}{}:}
For \DT{} we set the criterion to measure the quality of a split using the 'squared error' and used 'best' as the strategy to choose the split at each node. The minimum number of samples per leaf can range between [10, 20, 50].  A node will be split if this split induces a decrease of the impurity greater than or equal to 0.1. Complexity parameter 'ccp alpha' is used for Minimal Cost-Complexity Pruning where the subtree with the largest cost complexity that is smaller than 0.005 will be chosen~\citep{sklearn_api}. We use zero imputation for all  \DT{}s. 
\paragraph{\NEUMISS{}{}:} For \NEUMISS{} models we define the dimension of inputs and outputs of the NeuMiss block (n-features), set the number of layers (Neumann iterations) in the NeuMiss block (depth) and range the number of hidden layers in the MLP (mlp depth) between [3,5] and set the width of the MLP (mlp width) to 30. If 'None' takes the width of the MLP will be the same as the number of covariates of a data set~\citep{le2020neumiss}. 

\subsection{Real world data sets}~\label{appendix:real_world_data}
\paragraph{ADNI}
The data is obtained from the publicly available Alzheimer's Disease Neuroimaging Initiative (ADNI) database. ADNI collects clinical data, neuroimaging and genetic data~\citep{ADNI}. The regression task aims to predict the outcome of the ADAS13 (Alzheimer's Disease Assessment Scale)~\citep{mofrad2021cognitive} cognitive test at a 2-year follow-up based on available data at baseline.

\paragraph{Life}
The data set related to \textit{life} expectancy, has been collected from the WHO data repository website\citep{world2021ghe}, and its corresponding economic data was collected from the United Nations website. The data can be publicly accessed trough~\citep{owidlifeexpectancy}. In a regression task, we aim to predict the life expectancy in years from 193 countries considering data from the years 2000-2025. The final dataset consists of 20 columns and 2864 samples where all predicting variables were then divided into several broad categories: immunization factors, mortality factors, economic factors, and social factors.

\paragraph{Housing}
The Ames \textit{housing} data set was obtained from 
(\href{http://www.kaggle.com}{http://www.kaggle.com}) and describes the selling price of individual
properties, various features, and details of each home in Ames, Iowa, USA from 2006 to 2010~\citep{de2011ames}. We selected 51 variables on the quality and quantity of physical attributes of a property such as measurements of area dimensions for each observation, including the sizes of lots, rooms, porches, and garages or some geographical categorical features related to profiling properties and the neighborhood. In a regression task, we used 1460 observations. 

\subsection{Additional results}\label{appendix}
\begin{table}[ht]
\caption{Performance results for Synthetic data of 500 samples and 15 covariates over 10 seeds using Gurobi or beam-search as a solver for the optimization. $\lambda_0 = 0.01$  and $\lambda_1 = 0.01$ were chosen. } \label{app_tab:supp_ilp_beam}
\begin{center}
\begin{tabular}{l|lll|lll}
& \multicolumn{3}{c}{Synthetic (MNAR),  \textit{ILP}} & \multicolumn{3}{c}{Synthetic (MNAR), \textit{beam search}}  \\
\textbf{Model} &\textbf{$R^2$} & \textbf{MSE} & $\bar{\rho}$ &\textbf{$R^2$} & \textbf{MSE} & $\bar{\rho}$ \\
\hline \\
\MINTY{}$_{\gamma = 0}$  & 0.72 (0.61, 0.82) &  1.32 (1.00, 1.64)  &   0.36  & 0.73  (0.62,  0.83) &1.30  (0.98,  1.61)  &   0.36\\
\MINTY{}$_{\gamma=0.01}$ &  0.72  (0.61,  0.82) &  1.32 (1.00, 1.64) & 0.34 & 0.73 (0.63, 0.83) & 1.29 (0.98, 1.60) & 0.26\\
\MINTY{}$_{\gamma = 10000}$ & -0.00 (-0.21, 0.19) & 4.71 (4.11, 5.32)  &  0.03 & -0.01 (-0.21,  0.19) &4.74 (4.14, 5.35)   &  0.00\\
\end{tabular}
\end{center}
\end{table}

We show in Table~\ref{app_tab:supp_ilp_beam} the comparison between the optimal solution found by the Gurobi~\citep{gurobi} solver (left in table), and the approximate solutions using a heuristic beam search algorithm. We see that when using beam-search, we achieve almost the same results as with Gurobi.

\begin{table*}[t]
\begin{small}
\caption{Performance results for Synthetic data as described in, with 10 iterations and 7000 samples and 15 columns.} \label{app_tab:results_synth_MCAR}
\begin{center}
\begin{tabular}{l|lll}
&  \multicolumn{3}{c}{SYNTH (MCAR)} \\
\textbf{Model} &\textbf{$R^2$} & \textbf{MSE} & $\bar{\rho}$  \\
\hline \\ 
\LASSO{}$_{I_0}$  & 0.47 (0.43, 0.52)  &     4.64 (4.46,  4.82)  &0.82 \\
\DT{}$_{I_0}$  & 0.36 (0.31, 0.41) & 5.60 (5.39, 5.80) & 0.44 \\
\XGB{} & 0.76 (0.73, 0.79) & 2.04 (1.91, 2.16) & 0.99   \\ 
\NEUMISS{} &  0.75 (0.72, 0.78) & 2.07 (1.95, 2.20) & 0.99\\
\MINTY{}$_{\gamma=0}$ &  0.66 (0.63, 0.70) & 2.94  (2.79,  3.09) &  0.81  \\
\MINTY{}$_{\gamma=0.1}$ & 0.66 (0.62, 0.69) &  2.99 (2.84, 3.14)  &  0.80 \\
\MINTY{}$_{\gamma=10000}$  &  -0.00  (-0.06,  0.06) &  8.98 (8.72, 9.24)& 0.00 \\
\end{tabular}
\end{center}
\end{small}
\end{table*}

\begin{table*}[t]
\caption{Performance results for synthetic data as described in ~\ref{app_tab:customized_descrptions_synth}, with 10 iterations and 7000 samples and 15 columns.} \label{app_tab:results_synth_MAR_MNAR}
\begin{center}
\begin{tabular}{l|lll|lll}
& \multicolumn{3}{c}{Synth (MAR), $\lambda_0=0.001$, $\lambda_1=0.01$} & \multicolumn{3}{c}{Synth (MNAR), $\lambda_0=0.01$, $\lambda_1=0.01$}\\
\textbf{Model} &\textbf{$R^2$} & \textbf{MSE} & $\bar{\rho}$ & \textbf{$R^2$} & \textbf{MSE} & $\bar{\rho}$\\
\hline \\ 
\LASSO{}$_{I_0}$& 0.55 (0.51, 0.59) &  3.81 (3.65, 3.98) &  0.65 & 0.51 (0.47,  0.55) &  4.13 (3.96,  4.31) &   0.70  \\
\DT{}$_{I_0}$ & 0.43 (0.38, 0.48) &  4.94 (4.75, 5.14)   &   0.23 & 0.41   (0.36, 0.45) &  5.18  (4.98, 5.37)  &  0.29  \\ 
\XGB{} & 0.80  (0.77, 0.83)  &  1.64 (1.52, 1.75)  &    0.92 & 0.79  (0.76, 0.82) & 1.76 (1.64, 1.87)&  0.96  \\ 
\NEUMISS{} & 0.81 (0.79,  0.84)& 1.52 (1.41,  1.63) &  0.92 & 0.75 (0.71,   0.78)&  1.97 (1.85, 2.09) &   0.96 \\
\MINTY{}$_{\gamma = 0}$ & 0.69  (0.65,  0.72) & 2.69  (2.54,  2.83)  &  0.51 & 0.67  (0.63,  0.70)& 2.69  (2.55,  2.83) &   0.64\\
\MINTY{}$_{\gamma=0.01}$ &  0.69 (0.65,  0.72) & 2.69 (2.55,   2.83)   &  0.49 & 0.66 (0.63,  0.70) & 2.89  (2.74, 3.04)  & 0.64 \\
\MINTY{}$_{\gamma = 10000 }$  & 0.25  (0.20, 0.31) & 6.60 (6.38,   6.83)   &  0.00 & -0.00   (-0.06, 0.06)  & 8.81 (8.55, 9.07) &   0.00\\
\end{tabular}
\end{center}
\end{table*}

\paragraph{Complexity vs. predictiveness}
Results are shown in Figure~\ref{app:fig_complexity}, comparing the $R^2$s with estimator-specific complexity measurements. We observe that \MINTY{}$_{\gamma=0.1}$ balances the trade-off between good predictive performance with a small number of non-zero coefficients which in turn ensures lower model complexity (15 coefficients). One reason why \MINTY{}$_{\gamma=0.10}$  performs better than \MINTY{}$_{\gamma=0}$ (essentially zero-imputation) is that it can choose from a bigger set of rules. However, this also increases the reliance on imputed values and some level of bias in the model. \NEUMISS{} which shows the lowest complexity, however, depends on imputation, and cannot be interpreted due to its black-box nature. Similary for \DT{}, which performs the best on the ADNI data but perhaps lacks some interpretability with almost 40 numbers of leaves. In a \DT{}, neighboring leaves are similar to each other as they share the path in the tree. As the number of leaves increases, variance in the performance increases and perhaps compromises interpretability. \XGB{} achieves consistent performance across estimators, but could be difficult to interpret with a larger number of estimators (and an even larger number of parameters).  While \LASSO{} is the simplest model, its performance is the lowest.  

\begin{figure}[ht]
\centering
\vspace{.1in}
\includegraphics[width=10cm]{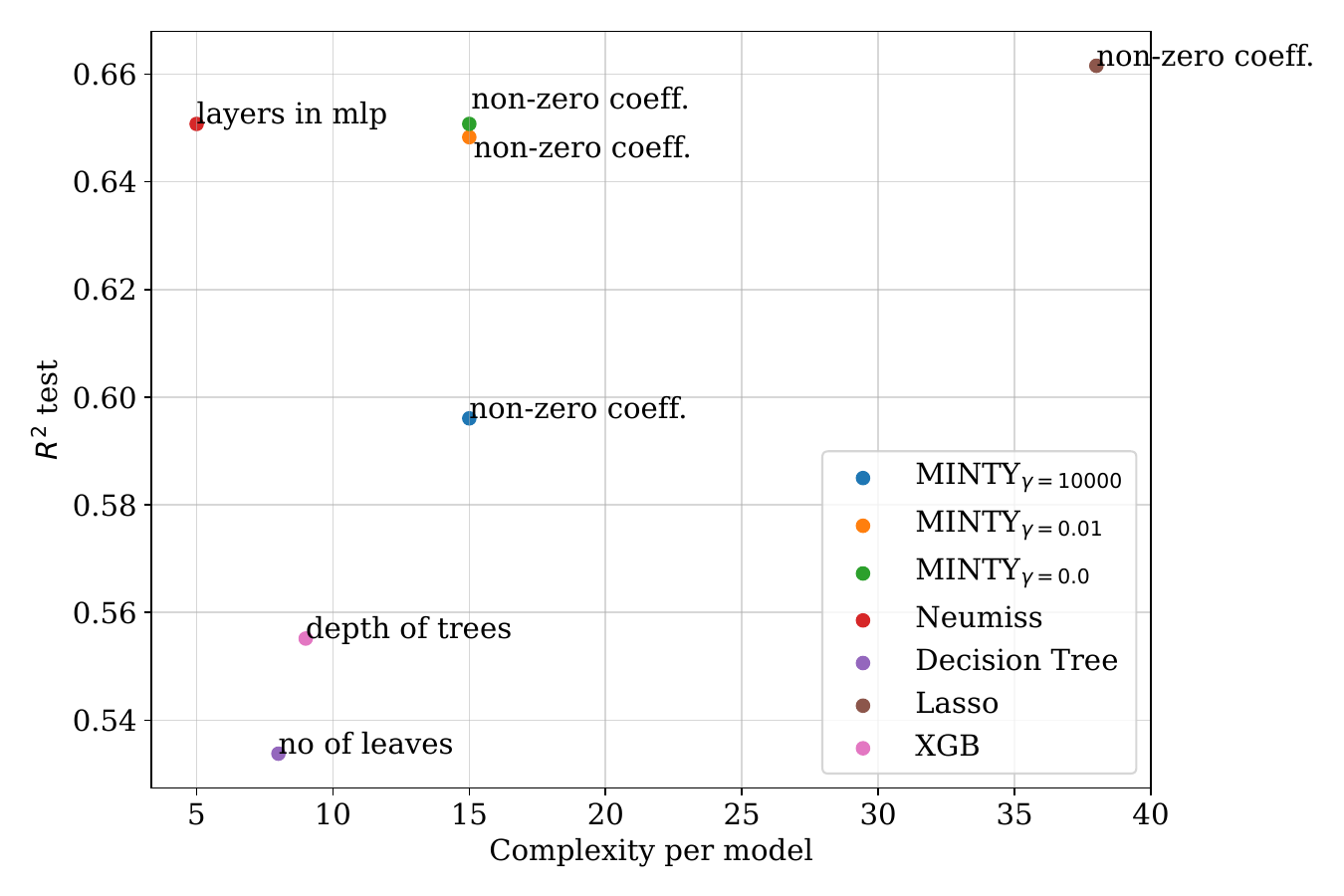}
\vspace{.2in}
\caption{Performance against complexity measurement on \ADNI{} data. As a criterion for complexity, we use for \MINTY{} models and \LASSO{} the number of non-zero coefficients achieved by regularisation. \NEUMISS{} does not aim at a sparse solution and therefore we give the complexity by the number of layers in the MLP network. Note, that there might be more parameters to optimize for. The complexity for \XGB{} is defined by the depth of the trees, and for \DT{} we describe the number of leaves.}\label{app:fig_complexity}
\end{figure}

\paragraph{Customized Rules}
\begin{table*}[t]
\centering
\caption{Customized rule sets for predictions using ADNI data using $\gamma=0$ (top) and $\gamma=0.01$ (bottom). The $R^2$ for the two models were $.64$ and $.63$ respectively, but the latter had significantly smaller reliance on features with missing values ($\bar{\rho}=0.28$ vs $\bar{\rho}=0.40$). The red rules in the top model are not present in the bottom and have larger missingness in the data. The blue rules in the bottom model are not present in the top and have less missingness.}  
\label{tab:customized_descrptions_ADNI_minty01}
\vskip 0.08in
\begin{center}
\begin{small}
\tabcolsep=0.04cm
\begin{tabular}{l|c}
\toprule
Learned Rules by \MINTY{}$_{\gamma =0.1}$ & Coeff.  \\
\midrule
 LDELTOTAL $\in [8-12]$ OR LDELTOTAL $\geq12$ OR Cognitive normal diagnosis &     -0.81 \\
 LDELTOTAL $\leq$ 3.0 OR LDELTOTAL $\in [8-12]$ OR Alzheimer's diagnosis  &   +0.42 \\
LDELTOTAL $\geq12$ OR MMSE $\leq 26$ &  +0.13 \\
AGE $\geq 78.5$ OR  MMSE $\in [26-28]$ OR Alzheimer's diagnosis   &  +0.35 \\
MMSE $\leq 26$ OR SEX = Male & +0.17 \\
AGE $\in [73.5-78.5]$ OR APOE4$=2.0$ OR Alzheimer's diagnosis &  +0.25 \\
68.9  AGE $\in [68.9-73.5]$ OR MMSE $\in [26-28]$ OR Alzheimer's diagnosis      &   +0.22 \\
MMSE $\in [26-28]$ OR MMSE $\geq 29.0$ OR Race=Black  & -0.22 \\
LDELTOTAL $\leq$ 3.0 OR APOE4 = 1.0  &   +0.14 \\
MMSE $\in [26-28]$ OR MMSE $\in [28-29]$ OR  Cognitive normal diagnosis OR EMCI diagnosis &  -0.15 \\
\midrule
Intercept &  -0.09\\ 
\bottomrule
\end{tabular}
\end{small}
\end{center}
\vskip -0.1in
\end{table*}
We use simulated data $X_{sim}$ by sampling $n\times d$ independent binary input features. However, we add some conditional dependence between columns 0 and 4 to illustrate the process of generating replacement variables focusing on predictive performance and interpretability. 
Each element of $X_{sim}$ is randomly set to 0 or 1 based on whether a random value drawn from a standard normal distribution is greater than 0.
The outcome $Y$ is based on the values in columns 0 and 4 of $X_{sim}$, adding a constant term of 1 and some random noise drawn from a standard normal distribution. 

\begin{table}[t]
\caption{Customized rule sets for predictions based on the ground true rule set (Top table). Learned rules set with corresponding coefficients in the bottom table are based on \MINTY{}. The results are based on a generated data set with $n = 7000$ samples and a $p_{miss}=0.1$}
\vskip 0.15in
\begin{center}
\begin{small}
\begin{sc}
\begin{tabular}{l| c}
\toprule
True Rules & Coeff.  \\
\midrule
Variable 1 OR Variable 4 & 2 \\
\midrule
Intercept & +1\\ 
\bottomrule
\end{tabular}
\vspace{1em}
\begin{tabular}{l| c}
\toprule
Learned Rules & Coeff.  \\
\midrule
Variable 1 OR Variable 4 & 1.63\\
\hline
Intercept &  +1.14\\ 
\bottomrule
\end{tabular}
\end{sc}
\end{small}
\end{center}
\vskip -0.1in
\label{app_tab:customized_descrptions_synth}
\end{table}

In Table~\ref{app_tab:customized_descrptions_synth}, we compare a set of learned rules (right Table) to the ground truth rules (left Table) from generated data. We interpret the results by saying that the model perfectly produces the correct rules, e.g. variable 1 and variable 4. Moreover, the coefficients and intercept are also identical if rounded. 

\clearpage
\section{Proof of Proposition 1}

\begin{thmprop*}
    Assume that an outcome $Y$ is linear in $X$ with noise of bounded conditional variance,
    $$
    Y = \beta^\top X + \epsilon(X), \mbox{ where }\; \E[\epsilon \mid X] = 0,  \V[\epsilon \mid X] \leq \sigma^2~,
    $$
    with $\beta \in \mathbb{R}^d$ and $X \in \{0,1\}^d$ a multivariate binary variable with the following structure. For each $X_i$ there is a paired ``replacement'' variable $X_{j(i)}$, with $j(j(i)) = i$, such that for $\delta \geq 0$, $p(X_i = X_{j(i)}) \geq 1-\delta$, and that whenever $X_i$ is missing,  $X_{j(i)}$ is observed, $M_i = 1 \Rightarrow M_{j(i)}=0$. Assume also that $\forall i, k \not\in \{i, j(i)\} : X_i \indep X_k$. Then, there is a GLRM $h$ with two-variable rules $\{\zX_i \lor \zX_{j(i)} \}_{i=1}^d$, where $\zX_i = (1-M_i)X_i$, with risk 
    $$
    R(h) \leq \delta \|\beta\|_2^2 + \delta^2 \sum_{i,k\not\in\{i, j(i)\}}|\beta_i \beta_k| + \sigma^2~.
    $$
    under the squared error. Additionally, if $\beta_i \geq 0$ and $\E[X_iM_i] \geq \eta$ for all $i \in [d]$, using the ground truth $\beta$ with zero-imputed features $\zX$ yields a risk bounded from below as 
    $$
    R(\beta) \geq \eta \|\beta\|_2^2 + \sigma^2~, 
    $$
    and a greater missingness reliance than the GLRM, $\bar{\rho}(\beta) \geq \bar{\rho}(h)$.
\end{thmprop*}

\begin{proof}
Let $\mu(X) = \E[Y \mid X]$. 
The risk of any hypothesis $h(X)$ can be decomposed as 
\begin{align*}
     R(h) = \E[L(h(X), Y)] = \E[(h(X) - Y)^2] = \\
\E[(h(X) - \mu(X))^2] + \underbrace{\E[\epsilon^2]}_{\leq \sigma^2}~.
\end{align*}

Now, consider a GRLM $h$ where each variable pair $i, j(i)$ is represented by a rule $(\zX_i \lor \zX_{j(i)})$, used in place of $X_i$ and $X_j$ in a linear model, and a coefficient $\tbeta_i = \beta_i + \beta_{j(i)}$. Then, for each $i$, define the bias variable
$$
\Delta_i = (\zX_i \lor \zX_{j(i)}) - X_i = \left\{
\begin{array}{ll}
1, & \mbox{ if } \zX_{j(i)} = 1 \land X_i = 0 \\
0, & \mbox{otherwise}
\end{array}
\right.~.
$$
In other words, bias is introduced, $\Delta_i=1$, only if the zero-imputed replacement $\zX_{j(i)}$ is 1 but $X_i$ is 0. $\zX_{j(i)}$ is only equal to 1 if $j(i)$ is observed. Thus, $\E[\Delta_i] = p(X_{j(i)}=1, X_i=0) \leq \delta$, by assumption. As a result, 
\begin{align*}
\E[(h(X) - \mu(X))^2] & = \E\left[\left( \sum_{i=1}^d (\beta_i (\zX_i \lor \zX_{j(i)}) - \beta_i X_i)\right)^2 \right] \\
& = \E\left[ \sum_{i,j=1}^d \beta_i \beta_j \Delta_i \Delta_j \right] \\
& = \sum_{i,j=1}^d \E[\beta_i \beta_j \Delta_i \Delta_j ] \\
& = \sum_{i=1}^d \left( \E[\beta_i^2 \Delta_i^2] + \E[\beta_i \beta_{j(i)} \underbrace{\Delta_i \Delta_{j(i)}}_{=0}] \right) + \\
& \sum_{k \not\in \{i,j(i)\}}\E[\beta_i \beta_k \Delta_i\Delta_k]  \\
& = \sum_{i=1}^d \left( \E[\beta_i^2 \Delta_i] + \sum_{k \not\in \{i,j(i)\}}\E[\beta_i \Delta_i]\E[\beta_k\Delta_k] \right) \\
& \leq \sum_{i=1}^d \left( \beta_i^2 \E[\Delta_i] + \sum_{k \not\in \{i,j(i)\}}\beta_i\beta_k \E[\Delta_i]\E[\Delta_k] \right) && \mbox{By independence, } X_i \indep X_k\\
& \leq \delta \|\beta\|^2 + \delta^2 \sum_{k \not\in \{i,j(i)\}}|\beta_i\beta_k |~.
\end{align*}
We can generalize the result by placing a bound on the cross-moment of the replacement bias $\E[\Delta_i \Delta_k]$, rather than assuming that $X_i \indep X_k$. 

There is also a lower bound for the ground-truth model applied to zero-imputed data with missingness. Its bias is
$$
B = \E[(\beta^\top X - \beta^\top \zX)^2] ) = \E[(\beta^\top (M \odot X))^2] 
$$
If all coefficiencts are positive, $\beta \in \bbR^d_+$,  and hence all terms in the bias, 
$$
B \geq \sum_{i=1}^d \E[(\beta_i M_i X_i)^2] = \sum_{i=1}^d \beta_i^2 \E[M_i X_i]
$$
By the assumption that $\E[M_i X_i] \geq \gamma$ for some $\gamma > 0$, it follows that 
$$
B \geq \gamma \|\beta\|_2^2~.
$$

The reliance on features with missing values $\bar{\rho}(h)$ of the GLRM $h$ is determined by events where a replacement variable $j(i)$ has the value $0$ when the variable $i$ is unobserved, $\exists i : \mathds{1}[M_i=1, X_{j(i)}=0)]$. If this is true for any $i$, $\rho=1$. For the ground-truth model, it is sufficient that a variable is missing, $\exists i: \mathds{1}[M_i=1]$. Hence, the expected reliance on features with missing values is greater for $\beta^\top \zX$ than for $h$.

In conclusion, the GLRM is preferred whenever 
$$
\delta \|\beta\|^2 + \delta^2 \sum_{i, k \not\in \{i,j(i)\}}|\beta_i\beta_k | < \gamma \|\beta\|^2~.
$$
Letting $a = \|\beta\|^2/(\sum_{i, k \not\in \{i,j(i)\}}|\beta_i\beta_k |)$ and solving for $\delta$, we get
$$
\delta < (\sqrt{a^2 + 4\eta} - a)/2 ~.
$$

\end{proof}

\vfill

\end{document}


%

%

\onecolumn
\aistatstitle{Supplementary Materials}
\section{Additional experimental details}
\subsection{Baseline models}~\label{app:baselines}
The baselines are trained by the following parameters. The best values for these hyperparameters are chosen based on the validation test set. 

\paragraph{\LASSO{}:} The values of alpha indicating a $\ell_1$ regularization term on weights range within $[0.1, 0.6]$, where increasing this value will make model more conservative. We allow to fit an intercept and set the precompute parameter to \textit{TRUE} to get the precomputed Gram matrix to speed up calculations~\citep{sklearn_api}. \LASSO{} is trained with zero and MICE imputation and chosen based on the validation performance. 
\paragraph{\XGB{}{}:} 
In \XGB{} we range the learning rate ($\eta$) between $[0.2, 0.3]$ where the shrinking step size is used in the update to prevent overfitting. After each boosting step, we can directly get the weights of new features, and $\eta$ shrinks the feature weights to make the boosting process more conservative.
The maximum depth of the trees is set to 4 since increasing this value will make the model more complex and more likely to overfit~\citep{chen2016xgboost}. 
The hyperparameters $\lambda$ represent the $\ell 2$ regularization term on weights and $\alpha$ indicates the $\ell 1$ regularization term. We set $\lambda$ to 0.5 and $\alpha$ to 0.2. Increasing this value will make a model more conservative. \XGB{} does not rely on imputation and chooses a default direction for missing values which it has learned during training. 
\paragraph{\DT{}{}:}
For \DT{} we set the criterion to measure the quality of a split using the 'squared error' and used 'best' as the strategy to choose the split at each node. The minimum number of samples per leaf can range between [10, 20, 50].  A node will be split if this split induces a decrease of the impurity greater than or equal to 0.1. Complexity parameter 'ccp alpha' is used for Minimal Cost-Complexity Pruning where the subtree with the largest cost complexity that is smaller than 0.005 will be chosen~\citep{sklearn_api}. We use zero imputation for all  \DT{}s. 
\paragraph{\NEUMISS{}{}:} For \NEUMISS{} models we define the dimension of inputs and outputs of the NeuMiss block (n-features), set the number of layers (Neumann iterations) in the NeuMiss block (depth) and range the number of hidden layers in the MLP (mlp depth) between [3,5] and set the width of the MLP (mlp width) to 30. If 'None' takes the width of the MLP will be the same as the number of covariates of a data set~\citep{le2020neumiss}. 

\section{Real world data sets}~\label{appendix:real_world_data}
\paragraph{ADNI}
The data is obtained from the publicly available Alzheimer's Disease Neuroimaging Initiative (ADNI) database. ADNI collects clinical data, neuroimaging and genetic data~\citep{ADNI}. The regression task aims to predict the outcome of the ADAS13 (Alzheimer's Disease Assessment Scale)~\citep{mofrad2021cognitive} cognitive test at a 2-year follow-up based on available data at baseline.
\paragraph{Life}
The data set related to \textit{life} expectancy, has been collected from the WHO data repository website\citep{world2021ghe} and its corresponding economic data was collected from the United Nations website. The data can be publicly accessed trough~\citep{owidlifeexpectancy}. In a regression task, we aim to predict the life expectancy in years from 193 countries considering data from the years 2000-2025. The final dataset consists of 20 columns and 2864 samples where all predicting variables were then divided into several broad categories: immunization factors, mortality factors, economic factors, and social factors.

\paragraph{Housing}
The Ames \textit{housing} data set was obtained from 
(\href{http://www.kaggle.com}{http://www.kaggle.com}) and describes the selling price of individual
properties, various features, and details of each home in Ames, Iowa, USA from 2006 to 2010~\citep{de2011ames}. We selected 51 variables on the quality and quantity of physical attributes of a property such as measurements of area dimensions for each observation, including the sizes of lots, rooms, porches, and garages or some geographical categorical features related to profiling properties and the neighborhood. In a regression task, we used 1460 observations. 

\section{Additional results}\label{appendix}
\begin{table}[ht]
\caption{Performance results for Synthetic data of 500 samples and 15 covariates over 10 seeds using Gurobi or beam-search as a solver for the optimization. $\lambda_0 = 0.01$  and $\lambda_1 = 0.01$ were chosen. } \label{app_tab:supp_ilp_beam}
\begin{center}
\begin{tabular}{l|lll|lll}
& \multicolumn{3}{c}{Synthetic (MNAR),  \textit{ILP}} & \multicolumn{3}{c}{Synthetic (MNAR), \textit{beam search}}  \\
\textbf{Model} &\textbf{$R^2$} & \textbf{MSE} & $\bar{\rho}$ &\textbf{$R^2$} & \textbf{MSE} & $\bar{\rho}$ \\
\hline \\
\MINTY{}$_{\gamma = 0}$  & 0.72 (0.61, 0.82) &  1.32 (1.00, 1.64)  &   0.36  & 0.73  (0.62,  0.83) &1.30  (0.98,  1.61)  &   0.36\\
\MINTY{}$_{\gamma=0.01}$ &  0.72  (0.61,  0.82) &  1.32 (1.00, 1.64) & 0.34 & 0.73 (0.63, 0.83) & 1.29 (0.98, 1.60) & 0.26\\
\MINTY{}$_{\gamma = 10000}$ & -0.00 (-0.21, 0.19) & 4.71 (4.11, 5.32)  &  0.03 & -0.01 (-0.21,  0.19) &4.74 (4.14, 5.35)   &  0.00\\
\end{tabular}
\end{center}
\end{table}

We show in Table~\ref{app_tab:supp_ilp_beam} the comparison between the optimal solution found by the Gurobi~\citep{gurobi} solver (left in table), and the approximate solutions using a heuristic beam search algorithm. We see that when using beam-search, we achieve almost the same results as with Gurobi.

\begin{table*}[t]
\begin{small}
\caption{Performance results for Synthetic data as described in, with 10 iterations and 7000 samples and 15 columns.} \label{app_tab:results_synth_MCAR}
\begin{center}
\begin{tabular}{l|lll}
&  \multicolumn{3}{c}{SYNTH (MCAR)} \\
\textbf{Model} &\textbf{$R^2$} & \textbf{MSE} & $\bar{\rho}$  \\
\hline \\ 
\LASSO{}$_{I_0}$  & 0.47 (0.43, 0.52)  &     4.64 (4.46,  4.82)  &0.82 \\
\DT{}$_{I_0}$  & 0.36 (0.31, 0.41) & 5.60 (5.39, 5.80) & 0.44 \\
\XGB{} & 0.76 (0.73, 0.79) & 2.04 (1.91, 2.16) & 0.99   \\ 
\NEUMISS{} &  0.75 (0.72, 0.78) & 2.07 (1.95, 2.20) & 0.99\\
\MINTY{}$_{\gamma=0}$ &  0.66 (0.63, 0.70) & 2.94  (2.79,  3.09) &  0.81  \\
\MINTY{}$_{\gamma=0.1}$ & 0.66 (0.62, 0.69) &  2.99 (2.84, 3.14)  &  0.80 \\
\MINTY{}$_{\gamma=10000}$  &  -0.00  (-0.06,  0.06) &  8.98 (8.72, 9.24)& 0.00 \\
\end{tabular}
\end{center}
\end{small}
\end{table*}

\begin{table*}[t]
\caption{Performance results for synthetic data as described in ~\ref{app_tab:customized_descrptions_synth}, with 10 iterations and 7000 samples and 15 columns.} \label{app_tab:results_synth_MAR_MNAR}
\begin{center}
\begin{tabular}{l|lll|lll}
& \multicolumn{3}{c}{Synth (MAR), $\lambda_0=0.001$, $\lambda_1=0.01$} & \multicolumn{3}{c}{Synth (MNAR), $\lambda_0=0.01$, $\lambda_1=0.01$}\\
\textbf{Model} &\textbf{$R^2$} & \textbf{MSE} & $\bar{\rho}$ & \textbf{$R^2$} & \textbf{MSE} & $\bar{\rho}$\\
\hline \\ 
\LASSO{}$_{I_0}$& 0.55 (0.51, 0.59) &  3.81 (3.65, 3.98) &  0.65 & 0.51 (0.47,  0.55) &  4.13 (3.96,  4.31) &   0.70  \\
\DT{}$_{I_0}$ & 0.43 (0.38, 0.48) &  4.94 (4.75, 5.14)   &   0.23 & 0.41   (0.36, 0.45) &  5.18  (4.98, 5.37)  &  0.29  \\ 
\XGB{} & 0.80  (0.77, 0.83)  &  1.64 (1.52, 1.75)  &    0.92 & 0.79  (0.76, 0.82) & 1.76 (1.64, 1.87)&  0.96  \\ 
\NEUMISS{} & 0.81 (0.79,  0.84)& 1.52 (1.41,  1.63) &  0.92 & 0.75 (0.71,   0.78)&  1.97 (1.85, 2.09) &   0.96 \\
\MINTY{}$_{\gamma = 0}$ & 0.69  (0.65,  0.72) & 2.69  (2.54,  2.83)  &  0.51 & 0.67  (0.63,  0.70)& 2.69  (2.55,  2.83) &   0.64\\
\MINTY{}$_{\gamma=0.01}$ &  0.69 (0.65,  0.72) & 2.69 (2.55,   2.83)   &  0.49 & 0.66 (0.63,  0.70) & 2.89  (2.74, 3.04)  & 0.64 \\
\MINTY{}$_{\gamma = 10000 }$  & 0.25  (0.20, 0.31) & 6.60 (6.38,   6.83)   &  0.00 & -0.00   (-0.06, 0.06)  & 8.81 (8.55, 9.07) &   0.00\\
\end{tabular}
\end{center}
\end{table*}

\paragraph{Complexity vs. predictiveness}
Results are shown in Figure~\ref{app:fig_complexity}, comparing the $R^2$s with estimator-specific complexity measurements. We observe that \MINTY{}$_{\gamma=0.1}$ balances the trade-off between good predictive performance with a small number of non-zero coefficients which in turn ensures lower model complexity (15 coefficients). One reason why \MINTY{}$_{\gamma=0.10}$  performs better than \MINTY{}$_{\gamma=0}$ (essentially zero-imputation) is that it can choose from a bigger set of rules. However, this also increases the reliance on imputed values and some level of bias in the model. \NEUMISS{} which shows the lowest complexity, however, depends on imputation, and cannot be interpreted due to its black-box nature. Similary for \DT{}, which performs the best on the ADNI data but perhaps lacks some interpretability with almost 40 numbers of leaves. In a \DT{}, neighboring leaves are similar to each other as they share the path in the tree. As the number of leaves increases, variance in the performance increases and perhaps compromises interpretability. \XGB{} achieves consistent performance across estimators, but could be difficult to interpret with a larger number of estimators (and an even larger number of parameters).  While \LASSO{} is the simplest model, its performance is the lowest.  

\begin{figure}[ht]
\centering
\vspace{.2in}
\includegraphics[width=12cm]{Complexity_synth.pdf}
\vspace{.2in}
\caption{Performance against complexity measurement on \ADNI{} data. As a criterion for complexity, we use for \MINTY{} models and \LASSO{} the number of non-zero coefficients achieved by regularisation. \NEUMISS{} does not aim at a sparse solution and therefore we give the complexity by the number of layers in the MLP network. Note, that there might be more parameters to optimize for. The complexity for \XGB{} is defined by the depth of the trees, and for \DT{} we describe the number of leaves.}\label{app:fig_complexity}
\end{figure}

\paragraph{Customized Rules}
\begin{table}[t]
\caption{Customized rule sets for predictions using ADNI data using $\gamma=0$ (top) and $\gamma=0.01$ (bottom). The $R^2$ for the two models were $.64$ and $.63$ respectively, but the latter had significantly smaller reliance on features with missing values ($\bar{\rho}=0.28$ vs $\bar{\rho}=0.40$). The red rules in the top model are not present in the bottom and have larger missingness in the data. The blue rules in the bottom model are not present in the top and have less missingness.}  
\label{tab:customized_descrptions_ADNI_minty01}
\vskip 0.15in
\begin{center}
\begin{small}
\begin{tabular}{l| c}
\toprule
Learned Rules by \MINTY{}$_{\gamma =0.1}$ & Coeff.  \\
\midrule
 LDELTOTAL $\in [8-12]$ OR LDELTOTAL $\geq12$ OR Cognitive normal diagnosis &     -0.81 \\
 LDELTOTAL $\leq$ 3.0 OR LDELTOTAL $\in [8-12]$ OR Alzheimer's diagnosis  &   +0.42 \\
LDELTOTAL $\geq12$ OR MMSE $\leq 26$ &  +0.13 \\
AGE $\geq 78.5$ OR  MMSE $\in [26-28]$ OR Alzheimer's diagnosis   &  +0.35 \\
MMSE $\leq 26$ OR SEX = Male & +0.17 \\
AGE $\in [73.5-78.5]$ OR APOE4$=2.0$ OR Alzheimer's diagnosis &  +0.25 \\
68.9  AGE $\in [68.9-73.5]$ OR MMSE $\in [26-28]$ OR Alzheimer's diagnosis      &   +0.22 \\
MMSE $\in [26-28]$ OR MMSE $\geq 29.0$ OR Race=Black  & -0.22 \\
LDELTOTAL $\leq$ 3.0 OR APOE4 = 1.0  &   +0.14 \\
MMSE $\in [26-28]$ OR MMSE $\in [28-29]$ OR  Cognitive normal diagnosis OR EMCI diagnosis &  -0.15 \\
\midrule
Intercept &  -0.09\\ 
\bottomrule
\end{tabular}
\end{small}
\end{center}
\vskip -0.1in
\end{table}

We use simulated data $X_{sim}$ by sampling $n\times d$ independent binary input features. However, we add some conditional dependence between columns 0 and 4 to illustrate the process of generating replacement variables focusing on predictive performance and interpretability. 
Each element of $X_{sim}$ is randomly set to 0 or 1 based on whether a random value drawn from a standard normal distribution is greater than 0.
The outcome $Y$ is based on the values in columns 0 and 4 of $X_{sim}$, adding a constant term of 1 and some random noise drawn from a standard normal distribution. 

\begin{table}[t]
\caption{Customized rule sets for predictions based on the ground true rule set (Top table). Learned rules set with corresponding coefficients in the bottom table are based on \MINTY{}. The results are based on a generated data set with $n = 7000$ samples and a $p_{miss}=0.1$}
\vskip 0.15in
\begin{center}
\begin{small}
\begin{sc}
\begin{tabular}{l| c}
\toprule
True Rules & Coeff.  \\
\midrule
Variable 1 OR Variable 4 & 2 \\
\midrule
Intercept & +1\\ 
\bottomrule
\end{tabular}
\vspace{1em}
\begin{tabular}{l| c}
\toprule
Learned Rules & Coeff.  \\
\midrule
Variable 1 OR Variable 4 & 1.63\\
\hline
Intercept &  +1.14\\ 
\bottomrule
\end{tabular}
\end{sc}
\end{small}
\end{center}
\vskip -0.1in
\label{app_tab:customized_descrptions_synth}
\end{table}

In Table~\ref{app_tab:customized_descrptions_synth}, we compare a set of learned rules (right Table) to the ground truth rules (left Table) from generated data. We interpret the results by saying that the model perfectly produces the correct rules, e.g. variable 1 and variable 4. Moreover, the coefficients and intercept are also identical if rounded.

\clearpage
\section{Proof of Proposition 1}

\begin{thmprop*}
    Assume that an outcome $Y$ is linear in $X$ with noise of bounded conditional variance,
    $$
    Y = \beta^\top X + \epsilon(X), \mbox{ where }\; \E[\epsilon \mid X] = 0,  \V[\epsilon \mid X] \leq \sigma^2~,
    $$
    with $\beta \in \mathbb{R}^d$ and $X \in \{0,1\}^d$ a multivariate binary variable with the following structure. For each $X_i$ there is a paired ``replacement'' variable $X_{j(i)}$, with $j(j(i)) = i$, such that for $\delta \geq 0$, $p(X_i = X_{j(i)}) \geq 1-\delta$, and that whenever $X_i$ is missing,  $X_{j(i)}$ is observed, $M_i = 1 \Rightarrow M_{j(i)}=0$. Assume also that $\forall i, k \not\in \{i, j(i)\} : X_i \indep X_k$. Then, there is a GLRM $h$ with two-variable rules $\{\zX_i \lor \zX_{j(i)} \}_{i=1}^d$, where $\zX_i = (1-M_i)X_i$, with risk 
    $$
    R(h) \leq \delta \|\beta\|_2^2 + \delta^2 \sum_{i,k\not\in\{i, j(i)\}}|\beta_i \beta_k| + \sigma^2~.
    $$
    under the squared error. Additionally, if $\beta_i \geq 0$ and $\E[X_iM_i] \geq \eta$ for all $i \in [d]$, using the ground truth $\beta$ with zero-imputed features $\zX$ yields a risk bounded from below as 
    $$
    R(\beta) \geq \eta \|\beta\|_2^2 + \sigma^2~, 
    $$
    and a greater missingness reliance than the GLRM, $\bar{\rho}(\beta) \geq \bar{\rho}(h)$.
\end{thmprop*}

\begin{proof}
Let $\mu(X) = \E[Y \mid X]$. 
The risk of any hypothesis $h(X)$ can be decomposed as 
$$
R(h) = \E[L(h(X), Y)] = \E[(h(X) - Y)^2] = \E[(h(X) - \mu(X))^2] + \underbrace{\E[\epsilon^2]}_{\leq \sigma^2}~.
$$
Now, consider a GRLM $h$ where each variable pair $i, j(i)$ is represented by a rule $(\zX_i \lor \zX_{j(i)})$, used in place of $X_i$ and $X_j$ in a linear model, and a coefficient $\tbeta_i = \beta_i + \beta_{j(i)}$. Then, for each $i$, define the bias variable
$$
\Delta_i = (\zX_i \lor \zX_{j(i)}) - X_i = \left\{
\begin{array}{ll}
1, & \mbox{ if } \zX_{j(i)} = 1 \land X_i = 0 \\
0, & \mbox{otherwise}
\end{array}
\right.~.
$$
In other words, bias is introduced, $\Delta_i=1$, only if the zero-imputed replacement $\zX_{j(i)}$ is 1 but $X_i$ is 0. $\zX_{j(i)}$ is only equal to 1 if $j(i)$ is observed. Thus, $\E[\Delta_i] = p(X_{j(i)}=1, X_i=0) \leq \delta$, by assumption. As a result, 
\begin{align*}
\E[(h(X) - \mu(X))^2] & = \E\left[\left( \sum_{i=1}^d (\beta_i (\zX_i \lor \zX_{j(i)}) - \beta_i X_i)\right)^2 \right] \\
& = \E\left[ \sum_{i,j=1}^d \beta_i \beta_j \Delta_i \Delta_j \right] \\
& = \sum_{i,j=1}^d \E[\beta_i \beta_j \Delta_i \Delta_j ] \\
& = \sum_{i=1}^d \left( \E[\beta_i^2 \Delta_i^2] + \E[\beta_i \beta_{j(i)} \underbrace{\Delta_i \Delta_{j(i)}}_{=0}] + \sum_{k \not\in \{i,j(i)\}}\E[\beta_i \beta_k \Delta_i\Delta_k] \right) \\
& = \sum_{i=1}^d \left( \E[\beta_i^2 \Delta_i] + \sum_{k \not\in \{i,j(i)\}}\E[\beta_i \Delta_i]\E[\beta_k\Delta_k] \right) \\
& \leq \sum_{i=1}^d \left( \beta_i^2 \E[\Delta_i] + \sum_{k \not\in \{i,j(i)\}}\beta_i\beta_k \E[\Delta_i]\E[\Delta_k] \right) && \mbox{By independence, } X_i \indep X_k\\
& \leq \delta \|\beta\|^2 + \delta^2 \sum_{k \not\in \{i,j(i)\}}|\beta_i\beta_k |~.
\end{align*}
We can generalize the result by placing a bound on the cross-moment of the replacement bias $\E[\Delta_i \Delta_k]$, rather than assuming that $X_i \indep X_k$. 

There is also a lower bound for the ground-truth model applied to zero-imputed data with missingness. Its bias is
$$
B = \E[(\beta^\top X - \beta^\top \zX)^2] ) = \E[(\beta^\top (M \odot X))^2] 
$$
If all coefficiencts are positive, $\beta \in \bbR^d_+$,  and hence all terms in the bias, 
$$
B \geq \sum_{i=1}^d \E[(\beta_i M_i X_i)^2] = \sum_{i=1}^d \beta_i^2 \E[M_i X_i]
$$
By the assumption that $\E[M_i X_i] \geq \gamma$ for some $\gamma > 0$, it follows that 
$$
B \geq \gamma \|\beta\|_2^2~.
$$

The reliance on features with missing values $\bar{\rho}(h)$ of the GLRM $h$ is determined by events where a replacement variable $j(i)$ has the value $0$ when the variable $i$ is unobserved, $\exists i : \mathds{1}[M_i=1, X_{j(i)}=0)]$. If this is true for any $i$, $\rho=1$. For the ground-truth model, it is sufficient that a variable is missing, $\exists i: \mathds{1}[M_i=1]$. Hence, the expected reliance on features with missing values is greater for $\beta^\top \zX$ than for $h$.

In conclusion, the GLRM is preferred whenever 
$$
\delta \|\beta\|^2 + \delta^2 \sum_{i, k \not\in \{i,j(i)\}}|\beta_i\beta_k | < \gamma \|\beta\|^2~.
$$
Letting $a = \|\beta\|^2/(\sum_{i, k \not\in \{i,j(i)\}}|\beta_i\beta_k |)$ and solving for $\delta$, we get
$$
\delta < (\sqrt{a^2 + 4\eta} - a)/2 ~.
$$

\end{proof}

\vfill

\bibliographystyle{plainnat} 
\bibliography{References}